\documentclass{article}

\PassOptionsToPackage{numbers, compress}{natbib}


    \usepackage[preprint]{neurips_2025}



\usepackage[utf8]{inputenc} 
\usepackage[T1]{fontenc}    
\usepackage{hyperref}       
\usepackage{url}            
\usepackage{booktabs}       
\usepackage{amsfonts}       
\usepackage{nicefrac}       
\usepackage{microtype}      
\usepackage{xcolor}         
\usepackage{tabularx}

\usepackage{amsmath}
\usepackage{graphicx}

\usepackage{lipsum}
\usepackage{amssymb}
\usepackage{float}
\usepackage{amsthm}

\usepackage{algpseudocode}
\usepackage{algorithm}

\newcommand{\dd}{\mathrm{d}}

\usepackage{xcolor}
\usepackage[svgnames,table,xcdraw]{xcolor}
\usepackage{xspace}

\usepackage{subcaption}
\usepackage[percent]{overpic}

\usepackage{multirow}

\newcommand{\eo}{{\hat\epsilon_\varnothing}}
\newcommand{\ec}{{\hat\epsilon_c}}
\newcommand{\ecw}{{\hat\epsilon_c^\omega}}

\newcommand{\dec}{{\Delta\ec}}

\definecolor{koreaflagblue}{rgb}{0.066, 0.286, 0.611}
\newcommand{\tort}[1]{\textcolor{MediumBlue}{#1}}
\newcommand{\Tort}{\tort{noise estimate}\xspace}
\newcommand{\xT}[1]{\tort{x^\mathsf{T}_{#1}}}
\newcommand{\hxT}[1]{\tort{\hat x^\mathsf{T}_{#1}}}
\newcommand{\hxTi}[2]{\tort{\hat x^{\mathsf{T}(#2)}_{#1}}}

\definecolor{koreaflagred}{rgb}{0.803, 0.188, 0.223}
\newcommand{\hare}[1]{\textcolor{Crimson}{#1}}
\newcommand{\Hare}{\hare{additional guidance}\xspace}
\newcommand{\xH}[1]{\hare{x^\mathsf{H}_{#1}}}
\newcommand{\hxH}[1]{\hare{\hat x^\mathsf{H}_{#1}}}
\newcommand{\hxHi}[2]{\hare{\hat x^{\mathsf{H}(#2)}_{#1}}}


\newcommand{\suggest}[2]{\textcolor{Black}{#2}}
\newcommand{\camready}[2]{#2}

\newcommand{\projectname}{Tortoise and Hare Guidance\xspace}
\newcommand{\projectfullname}{Tortoise and Hare Guidance (THG)\xspace}

\newtheorem{theorem}{Theorem}

\title{Tortoise and Hare Guidance: Accelerating Diffusion Model Inference with Multirate Integration}

%

\author{%
  Yunghee Lee \hspace{0.7cm} Byeonghyun Pak \hspace{0.7cm} 
  Junwha Hong \hspace{0.7cm} Hoseong Kim\textsuperscript{\dag}
  \vspace{1pt} \\ 
  Agency for Defense Development 
  \vspace{1pt} \\ 
  \texttt{\{yhl, byeonghyun\_pak, qbit, hoseongkim\}@add.re.kr} 
  \vspace{-7pt}
}

\begin{document}

\maketitle
\begingroup
\renewcommand\thefootnote{\dag}
    \footnotetext{Corresponding author.}
\endgroup

\begin{abstract}
  In this paper, we propose \textbf{\projectfullname}, a training-free strategy that accelerates diffusion sampling while maintaining high-fidelity generation.
  We demonstrate that the \Tort and the \Hare term exhibit markedly different sensitivity to numerical error by reformulating the classifier-free guidance (CFG) ODE as a \textit{multirate system of ODEs}.
  Our error-bound analysis shows that the \Hare branch is more robust to approximation, revealing substantial redundancy that conventional solvers fail to exploit.
  Building on this insight, THG significantly reduces the computation of the \Hare:
  the \Tort is integrated with the \tort{tortoise equation} on the original, fine-grained timestep grid, while the \Hare is integrated with the \hare{hare equation} only on a coarse grid.
  We also introduce  (i) an error-bound-aware timestep sampler that adaptively selects step sizes and (ii) a guidance-scale scheduler that stabilizes large extrapolation spans.
  THG reduces the number of function evaluations (NFE) by up to 30\% with virtually no loss in generation fidelity ($\Delta\text{ImageReward}\leq 0.032$) and outperforms state-of-the-art CFG-based training-free accelerators under identical computation budgets. 
  Our findings highlight the potential of multirate formulations for diffusion solvers, paving the way for real-time high-quality image synthesis without any model retraining.
  The source code is available at \url{https://github.com/yhlee-add/THG}.
\end{abstract}

\begin{figure}[t]
    \centering
    \includegraphics[width=.78\linewidth]{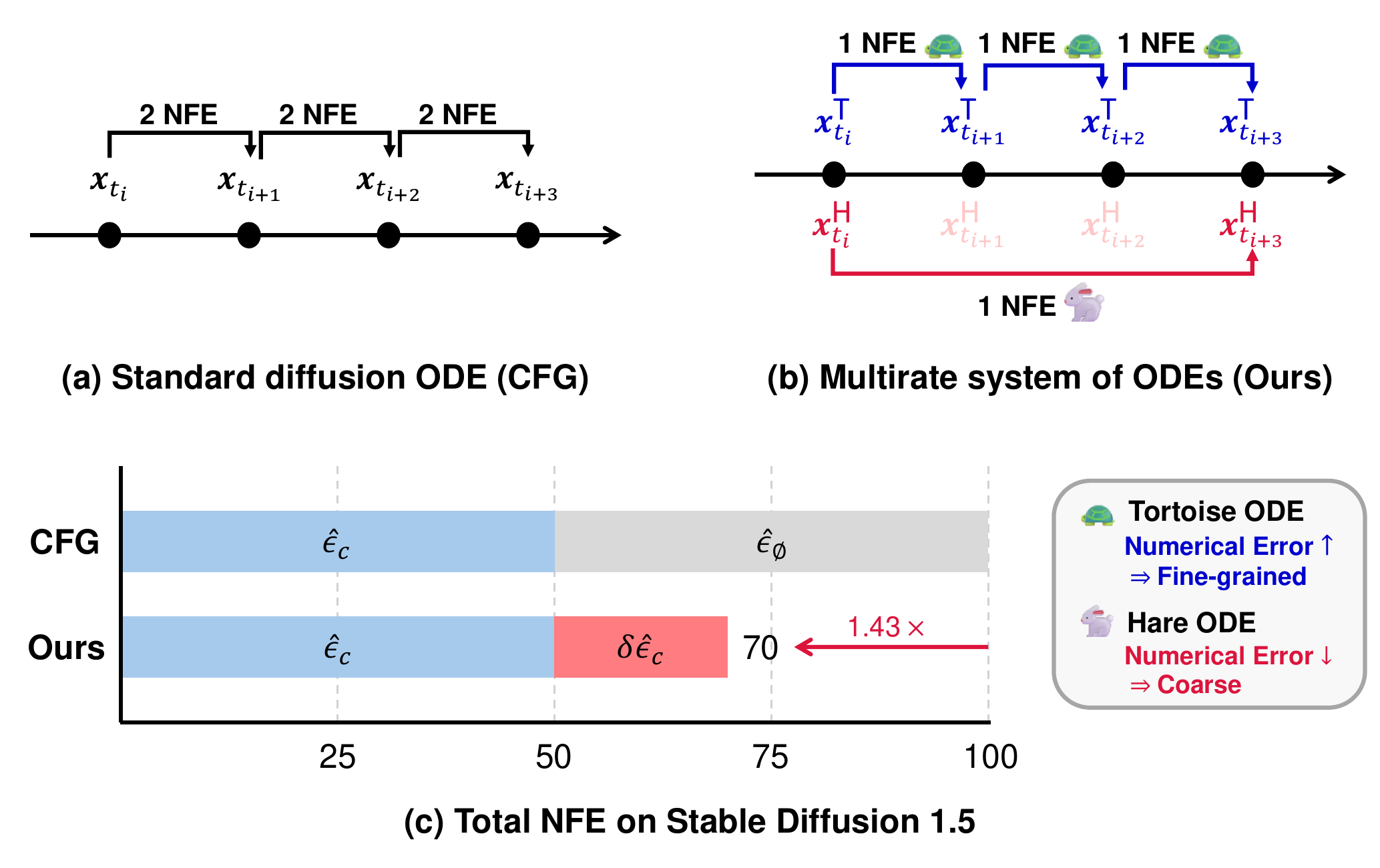}
    \vspace{-5pt}
    \caption{
        \textbf{Conceptual illustration of \projectname.}
        We decompose the standard diffusion ODE into a \tort{tortoise} branch (Eq.~\ref{eq:tort_bh}), which is numerically sensitive and thus integrated on a fine-grained grid, and a \hare{hare} branch (Eq.~\ref{eq:hare_bh}), which is comparatively less sensitive and can be integrated with larger step sizes.
        Our multirate scheme evaluates each branch at different timestep grids, skipping unnecessary evaluations, thereby boosting inference efficiency without sacrificing sample quality.
    }
    \label{fig:teaser}
\end{figure}

\vspace{-5pt}

\section{Introduction}
Diffusion models (DMs) have become the state-of-the-art generative model for images \cite{esser2024scaling, rombach2022ldm, stabilityai2024sd35l} and, more recently, for video \cite{ho2022video, blattmann2023stable,zheng2024open, hong2022cogvideo} and audio-visual content \cite{cheng2024taming,ruan2023mm}.
Despite their impressive quality, sampling is costly: each output is obtained by iteratively denoising a noisy sample, and the latency scales with the total number of function evaluations (NFE) required by the solver.


Many practical scenarios, such as text-to-image synthesis, class-controlled synthesis, or in-context image editing, require conditional generation.
The dominant technique for high-quality conditioning is \textit{classifier-free guidance} (CFG) \cite{ho2022cfg}, which improves perceptual quality and controllability.
However, CFG runs the denoising network twice per timestep---once conditional and once unconditional---thereby doubling the NFE.
For real-time applications, such as interactive editing and large-scale serving, evaluating a deep backbone at every timestep remains a major bottleneck.


A large body of work to accelerate these models has focused on two main approaches. 
Some approaches reduce the number of steps using higher-order ODE/SDE solvers \cite{song2020ddim, song2020score, lu2022dpm} or distillation \cite{salimans2022distillation, meng2023distillation}, while
others---such as cache-based strategies like DeepCache~\cite{ma2024deepcache} and Learning-to-Cache~\cite{ma2024learningtocache}---lower the cost per step by reusing intermediate features. 
Nevertheless, both approaches still perform two forward passes whenever CFG is enabled, implicitly assuming that conditional and unconditional calls are equally indispensable.


Through the lens of numerical analysis, we revisit CFG by reformulating the reverse diffusion process as a two-state multirate system of ODEs whose trajectories are governed by the \Tort and the \Hare term.
Our error-bound analysis reveals a pronounced asymmetry: the \Hare \suggest{branch}{term} is more robust to approximation than the \Tort, exposing substantial redundancy that conventional solvers fail to exploit.
This finding raises a natural question:
\textit{Do we \suggest{truly}{} need to compute the neural network twice at every fine-grained timestep?}


Leveraging this asymmetry, we introduce \textbf{\projectfullname}, a training-free sampler that bypasses most \Hare computation.
The \Tort is integrated with the \tort{tortoise equation} on the original fine-grained timestep grid.
Meanwhile, the \Hare is integrated with the \hare{hare equation} only on a coarse grid.
We further introduce (i) an error-bound-aware timestep sampler that adaptively determines the coarse grid, and 
(ii) a guidance-scale scheduler that keeps the trajectory stable over significant gaps.


With these components, \suggest{\projectname}{THG} achieves sampling speeds up to $1.43\times$ faster by reducing the NFE budget from 100 to as low as 70 while maintaining virtually identical generation fidelity ($\Delta \text{ImageReward}\leq 0.032$).
%
\camready{}{Moreover, across Stable Diffusion 1.5 \cite{rombach2022ldm}, 3.5 Large \cite{stabilityai2024sd35l}, and AudioLDM 2 \cite{liu2024audioldm}, our method outperforms state-of-the-art CFG-based training-free accelerators under identical computation budgets.}
Our study highlights the potential of multirate formulations for accelerating diffusion models and brings us a step closer to achieving real-time performance and high-quality image synthesis without retraining the model.


In summary, our contributions are threefold:

\begin{itemize}
    \item We are the first to cast the reverse diffusion ODE as a two-state multirate system of ODEs and to provide an error-bound analysis showing that the \Hare term can be safely approximated at a much coarser temporal resolution.
    \item We design \textbf{\projectfullname}, a training-free sampler that eliminates the need for a significant amount of \Hare term evaluation. THG is compatible with any diffusion backbone.
    \item Using image-text pairs from the COCO 2014 dataset, we demonstrate that THG can reduce NFEs up to 30\% with virtually no loss in generation fidelity ($\Delta \text{ImageReward}\leq 0.032$). THG outperforms state-of-the-art CFG-based accelerators under identical compute budgets.
\end{itemize}

\section{Related work}

\paragraph{Diffusion models}
Denoising Diffusion Probabilistic Models (DDPMs) \cite{ho2020ddpm} laid the foundation for modern diffusion models by introducing a probabilistic framework.
A forward Markov process gradually corrupts a data point $x_0$ into Gaussian noise. 
In the reverse process, at each timestep $t$, a neural network $\hat\epsilon_\theta(x_t,t)$ estimates and removes the noise component in $x_t$ to recover $x_{t-1}$, ultimately reconstructing $x_0$.
The denoising trajectory can be interpreted either as a stochastic differential equation (SDE) or its deterministic counterpart, the probability flow ODE (PF-ODE) \cite{song2020score}.
Denoising Diffusion Implicit Models (DDIMs) \cite{song2020ddim} drop the strict Markov assumption of DDPMs and apply Tweedie's formula \cite{efron2011tweedie} to jump directly from $x_t$ to $x_s$, cutting sampling steps from hundreds of steps to as few as 50 and effectively solving the PF-ODE in a single deterministic pass \cite{song2020score}.

\paragraph{ODE-based integrators}
Viewing diffusion sampling as an initial-value ODE problem enables high-order integration 
techniques. Concretely, DPM-solver \cite{lu2022dpm} observes that the diffusion ODE
\begin{equation}
    \dd x_t / \dd t = f(t)x_t + (g^2(t) / 2\sigma_t)\hat\epsilon_\theta (x_t)
\end{equation} 
has a semi-linear term $f(t)x_t$.
The need for approximation for the linear term is eliminated by solving the semi-linear ODE using the \textit{variation of constants} formula.
This semi-linear integrator then affords large step sizes with minimal approximation error.
Inspired by these semi-linear methods, we introduce a multirate formulation for the classifier-free guidance (CFG) scheme \cite{ho2022cfg} that adjusts the step size of each component of CFG to its own dynamics, 
achieving further reductions in the number of function evaluations (NFE) without degrading sample quality.

\paragraph{Classifier-free guidance and its variations}
In real-world applications, diffusion models must produce samples that satisfy a given condition (e.g., class label or text prompt).
Classifier Guidance \cite{dhariwal2021cg} achieves this by incorporating a pre-trained classifier $p_\phi(c|x_t)$, effectively sampling from the \textit{sharpened} density $p(x)p(c|x)^\omega$, where $\omega$ controls the strength of the bias towards class $c$.
Classifier-Free Guidance (CFG) \cite{ho2022cfg} eliminates the need for an external classifier by training a single denoising network that gives both conditional and unconditional outputs.
Concretely,
if $\hat\epsilon_\theta(x_t,c)$ and $\hat\epsilon_\theta(x_t,\varnothing)$ denote the network's noise predictions with and without condition $c$, respectively,
then CFG defines
\begin{equation}
    \hat\epsilon_\theta^\text{CFG} (x_t, c) 
    = { \hat\epsilon_\theta(x_t, \varnothing) }
    + { \omega \cdot \left( 
        \hat\epsilon_\theta(x_t, c) - \hat\epsilon_\theta(x_t, \varnothing) 
    \right) }.
\end{equation}
Subsequent variants focus on finding the optimal strength and timing of guidance for balancing condition fidelity against sample diversity.
Guidance Interval \cite{kynkaanniemi2024limited} restricts the use of CFG to mid-level noise steps, avoiding over-conditioning at the beginning and final stages of the  sampling process.
CADS and Dynamic-CFG \cite{sadat2023cads} slowly anneal either the conditioning vector or the scale $\omega$ during the early denoising steps, preserving diversity in the final samples.
PCG \cite{bradley2024predictor} reformulates CFG as a predictor-corrector method (with $\omega'=2\omega-1$) that alternates between denoising and sharpening phases.
CFG++ \cite{chung2024cfgpp} treats guidance as an explicit loss term rather than a sampling bias, splitting each DDIM iteration into ``denoising'' and ``renoising'' phases.
Unlike these methods, we reformulate the diffusion ODE using a multirate method, integrating the noise estimate on a fine-grained grid and the additional guidance term on a coarse grid, reducing the NFE while preserving sample quality.

\paragraph{Efficient diffusion models}
Beyond advanced ODE/SDE solvers, various methods have been proposed to speed up pre-trained diffusion models.
Distillation methods \cite{salimans2022distillation, meng2023distillation} compress a pre-trained ``teacher'' model into a ``student'' model that can advance multiple timesteps in one forward pass.
While these methods reduce the number of sampling steps, they incur substantial retraining costs.  
Cache-based techniques exploit feature redundancy within the denoising neural network $\hat\epsilon_\theta$.
DeepCache \cite{ma2024deepcache} reuses high-level U-Net activations across adjacent steps. 
Learning-to-Cache \cite{ma2024learningtocache} introduces a layer‐wise caching mechanism that dynamically reuses transformer activations across timesteps via a timestep-conditioned router.
$\Delta$-Dit \cite{chen2024delta} leverages stage-adaptive caching of block-specific feature offsets in DiT models to speed up inference without retraining.
These methods deliver inference speedups without retraining but depend heavily on the model’s internal architecture.  
More recently, several works have noted that CFG doubles the NFE per denoising step and have proposed methods to \suggest{eliminate}{reduce} this extra cost.
Adaptive Guidance \cite{castillo2023adaptive} adaptively skips redundant guidance steps based on cosine similarity between conditional and unconditional predictions.
FasterCache \cite{lv2024fastercache} reuses attention features and conditional-unconditional residuals to mitigate CFG overhead.
Although these methods reduce the NFE, they lack a rigorous theoretical foundation and leave further savings on the table.
Our approach delivers a more efficient and theoretically grounded method of guided diffusion by directly exploiting the CFG's intrinsic dynamics.

\section{Method}
\label{sec:method}


In this section, we introduce \textbf{\projectfullname}, which accelerates diffusion model inference by leveraging the asymmetry between the \Tort and the \Hare terms.
Since the \Hare term varies more slowly \emph{w.r.t.} the denoising timestep $t$ than the \Tort term, we apply a multirate integration scheme that uses a coarser timestep grid for the \Hare term (Sec.~\ref{sec:multirate} and Sec. \ref{sec:tort-hare}).
We then perform an approximation error-bound analysis to determine the appropriate grid granularity (Sec.~\ref{sec:error_bound}).
Finally, we propose an adaptive guidance scale to compensate for any performance degradation resulting from the reduced number of evaluation points (Sec.~\ref{sec:adjust}).


\paragraph{Preliminaries}
To accommodate different definitions of the diffusion process \cite{ho2020ddpm, song2020score, wang2024unified}, we adopt a general notation \cite{lu2022dpm} so that the forward process and the diffusion ODE are described as follows:
\begin{equation}
    q(x_t|x_0) := \mathcal{N}(x_t; \alpha_t x_0, \sigma_t^2 I),
    \quad 
    \frac{\dd x_t}{\dd t}
    = f(t)x_t + \frac{g^2(t)}{2\sigma_t} \hat\epsilon_\theta (x_t),
    \quad
    x_T \sim \mathcal{N}(0, \sigma_T^2 I)
    ,
    \label{eq:diffusion-ode}
\end{equation}
where
$
f(t) = \frac{\dd \log \alpha_t}{\dd t},~
    g^2(t) = \frac{\dd \sigma_t^2}{\dd t } - 2 \frac{\dd \log\alpha_t}{\dd t}\sigma_t^2,
    ~\text{and}~
    t \in [0,T].
$
($v$-prediction models are covered in Appendix \ref{sec:appendix_vpred}.)
$\alpha_t$ and $\sigma_t$ are the predefined noise schedule of the diffusion model.
Although modern diffusion models primarily operate in the latent space  \cite{rombach2022ldm}, we adopt $x$ (instead of $z$), as our framework is agnostic to this choice.
For brevity, we denote 
the unconditional noise estimate $\eo(x_t) := \hat\epsilon_\theta(x_t, \varnothing)$,
the conditional noise estimate $\ec(x_t) = \hat\epsilon_\theta(x_t, c)$,
the difference of the two $\dec(x_t) := \ec(x_t) - \eo(x_t)$,
and the CFG noise estimate $\ecw(x_t) = \hat\epsilon_\theta^\text{CFG}(x_t, c)$
following \cite{chung2024cfgpp}.


\subsection{A multirate formulation}
\label{sec:multirate}

\begin{figure}[t]
    \centering
    \includegraphics[width=.9\linewidth]{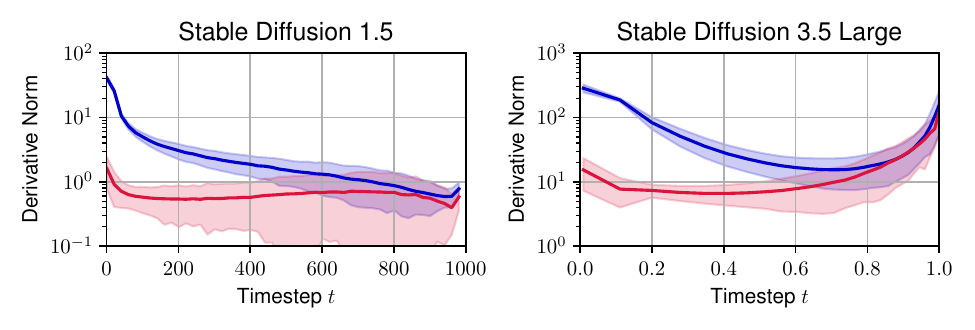}
    \vspace{-10pt}
    \caption{
        \textbf{Time-derivative norms of the noise estimate \tort{$\ec (x_t)$} and additional guidance \hare{$\dec(x_t)$}.}
        We plot the L2 norms of the time derivatives $\frac{\dd}{\dd t}\ec(x_t)$ and $\frac{\dd}{\dd t}\dec(x_t)$ across diffusion timesteps for Stable Diffusion 1.5 and 3.5 Large.
        The results confirm that the \tort{noise estimate} exhibits greater temporal sensitivity compared to the \hare{guidance term}.
        Shaded areas denote two standard deviations over multiple prompts.
    }
    \label{fig:derivative}
\end{figure}

We propose a multirate formulation \cite{rice1960split}, in which the reverse diffusion process is decomposed into numerically sensitive and less sensitive components to reduce the number of function evaluations (NFE).
\suggest{The diffusion ODE in Eq. \ref{eq:diffusion-ode} is reformulated}{We begin by writing the diffusion ODE in Eq. \ref{eq:diffusion-ode}} 
by explicitly separating it into two distinct terms, 
the \Tort and the \Hare term.
\suggest{}{By the definition of CFG, we have}
\begin{equation}
    \hat\epsilon_\theta(x_t) 
    :=
    \ecw(x_t) = \eo(x_t) + \omega \cdot \dec(x_t)
    \equiv \tort{\ec(x_t)} + \hare{(\omega-1)\cdot \dec(x_t)}.
    \label{eq:eps_def_bh}
\end{equation}
Substituting Eq. \ref{eq:eps_def_bh} into Eq. \ref{eq:diffusion-ode} yields the following:
\begin{equation}
    \frac{\dd}{\dd t} x_t 
    = f(t)x_t + \frac{g^2(t)}{2\sigma_t} \ecw(x_t)
    = 
    f(t)x_t 
    + \underbrace{
        \frac{g^2(t)}{2\sigma_t} \tort{\ec(x_t)}
    }_\text{sensitive}
    + \underbrace{
        \frac{g^2(t)}{2\sigma_t} \hare{(\omega - 1)\dec(x_t)}
    }_\text{less sensitive}
    .
    \label{eq:substitute}
\end{equation}
We observe a significant difference in temporal sensitivity between the \Tort term and the \Hare term.
Figure~\ref{fig:derivative} plots the time-derivative norms of $\tort{\ec(x_t)}$ and $\hare{\delta\ec(x_t)}$, confirming that the \Tort varies more rapidly than the \Hare term.
This result clearly demonstrates that the \Tort exhibits greater numerical sensitivity than the \Hare.


This motivates \suggest{a multirate ODE method:}{the use of a multirate method \cite{sandu2023tim} where} the sensitive term is integrated on a fine-grained grid, and the less sensitive term \suggest{}{is integrated} on a coarse grid.
%
We split the diffusion ODE (Eq.~\ref{eq:substitute}) into the following system of ODEs:
\begin{align}
    \frac{\dd}{\dd t} \xT{t} 
    &= f(t) \xT{t} 
    + \frac{g^2(t)}{2\sigma_t} \tort{
        \ec(\xT{t} + \xH{t})
    }, 
    \label{eq:tort_bh}
    \\
    \frac{\dd}{\dd t} \xH{t} 
    &= f(t) \xH{t}
    + \frac{g^2(t)}{2\sigma_t} \hare{
        (\omega - 1) \dec(\xT{t} + \xH{t})
    },
    \label{eq:hare_bh}
\end{align}
where $\xT{T} = x_T$, $\xH{T} = 0$, and $x_t:=\xT{t}+\xH{t}$.
The \tort{tortoise} $\xT{t}$ covers the \Tort part of the diffusion ODE, while the \hare{hare} $\xH{t}$ takes care of the \Hare term.
We call the ODE integrated on the fine‐grained grid the \tort{tortoise equation} (Eq.~\ref{eq:tort_bh}), and the ODE integrated on the coarse grid the \hare{hare equation} (Eq.~\ref{eq:hare_bh}).
\suggest{
Intuitively, the \hare{hare} can \textit{leap} by a bigger step size compared to the \tort{tortoise}.
Since the \hare{hare equation} can be integrated using \suggest{coarser timestep intervals}{bigger step sizes}, it \suggest{enables}{allows for} skipping unnecessary NFEs, thereby significantly improving the computational efficiency of solving the \suggest{original}{} diffusion ODE.
}
{Intuitively, the \hare{hare equation} uses coarser timestep intervals—i.e. larger steps—allowing it to skip unnecessary computation and thus significantly improve the efficiency of integrating the diffusion ODE.}
\suggest{Furthermore, as \suggest{this}{both equations of this} multirate formulation retain the form of a diffusion ODE, existing diffusion model solvers such as DDIM \cite{song2020ddim} can be seamlessly applied to integrate each equation.}
{Moreover, because both equations retain the standard diffusion ODE form, existing solvers such as DDIM \cite{song2020ddim} can be applied to each equation without modification.}


\subsection{\projectname}
\label{sec:tort-hare}

Solving the \hare{hare equation} (Eq.~\ref{eq:hare_bh}) on the coarse grid is straightforward, since every coarse timestep is also a fine-grained timestep.
By contrast, because the \tort{tortoise equation} (Eq.~\ref{eq:tort_bh}) requires the full state $x_t = \xT{t} + \xH{t}$ at every fine-grained timestep, we must infer $\xH{t}$ at those intermediate points \cite{rice1960split}.
Instead of using generic extrapolation methods \cite{lv2024fastercache}, we exploit a property of diffusion model solvers: given $x_t$ and $\hat{\epsilon}_\theta(x_t)$, they can deterministically compute $x_s$ for any $s < t$ by running the chosen solver from $t$ to $s$.
%
From each coarse timestep, we run the solver not only to compute $\xH{t}$ for the next coarse timestep but also to populate $\xH{t}$ for all intermediate fine-grained timesteps,
thereby constructing the full trajectory of $\xH{t}$ on the fine-grained grid for use in integrating the \tort{tortoise equation}.

\begin{algorithm}[t]
    \caption{\projectname Algorithm}
    \label{alg:tortoise_hare_bh}
    \begin{algorithmic}[1]
        \Require $x_T \sim \mathcal{N}(0, \sigma_T^2 I)$ \Comment{Initial noise}
        \Require $\omega \geq 0$ \Comment{Guidance scale}
        \Require $\{t_i\}_{0 \leq i \leq N}, t_0 = T, t_N=0$
        \Comment{Fine-grained timestep grid}
        \Require $C \subset \{t_i \vert 0 \leq i \leq N\}, 0 \in C, T \in C$
        \Comment{Coarse timestep grid}
        \State $\xT{T} \leftarrow x_T$
        \State $\xH{T} \leftarrow 0$
        \For{$i=0 \textbf{\ to } N-1$}
            \State $\ec \leftarrow \hat\epsilon_\theta (\xT{t_i} + \xH{t_i}, c)$ 
            \Comment{1 NFE}
            \State $\xT{t_{i+1}} \leftarrow \text{Solver}(\xT{t_i}, \ec, t_i, t_{i+1})$
            \Comment{Compute $\xT{t_{i+1}}$ given $\xT{t_i}$}
            \If{$t_i \in C$}
                \State $\eo \leftarrow \hat\epsilon_\theta (\xT{t_i} + \xH{t_i}, \varnothing)$
                \Comment{1 NFE (only if $t_i \in C$)}
                \State $\dec \leftarrow \ec - \eo$
                \State $j \leftarrow i$
                \Repeat 
                    \Comment{Compute $\xH{}$ up to the next coarse timestep}
                    \State $j \leftarrow j + 1$
                    \State $\xH{t_j} \leftarrow \text{Solver}(\xH{t_i}, (\omega - 1) \cdot \dec, t_i, t_j)$
                    \Comment{Compute $\xH{t_j}$ given $\xH{t_i}$}
                \Until{$t_j \in C$}
                \Comment{$t_j$ equals the next coarse timestep at inner loop exit}
            \EndIf
        \EndFor
        \State $x_0 \leftarrow \xT{0} + \xH{0}$
        \State \Return $x_0$
    \end{algorithmic}
\end{algorithm}

Building on this formulation, we propose an implementation strategy summarized in Algorithm~\ref{alg:tortoise_hare_bh}.
\suggest{}{
While the standard diffusion solver evaluates both $\ec(x_t)$ and $\dec(x_t)$ at every fine-grained timestep, our scheme evaluates $\dec(x_t)$ only on the coarse grid $C \subset \{t_0,\dots,t_N\}$, thereby significantly reducing NFE.}
At each coarse step $t_i \in C$, the updated guidance term is used to integrate the \hare{hare} \hare{equation} across the fine-grained grid until the next coarse step.
We then use the resulting $\xH{t}$ values during the subsequent \tort{tortoise equation} steps.
As a result, the NFE is reduced from $2N$ to $N+|C|-1$ while preserving the dynamics of the original diffusion ODE.
Moreover, it slots seamlessly into existing diffusion pipelines without any changes to their core logic.


\subsection{Approximation error bound analysis}
\label{sec:error_bound}

%


To determine an appropriate coarse grid $C$ for the \hare{hare equation}, we now turn to an error-based criterion.
Our objective is to ensure that the integration error of $\xH{t}$ remains sufficiently small relative to that of $\xT{t}$.
To this end, we adopt a standard multirate strategy \cite{gear1984multirate}. 
We select coarse step sizes such that the ratio between the hare's approximation error and the tortoise's approximation error does not exceed a user-specified threshold $\rho$ such that $\rho \approx 1$:
\begin{equation}
    \frac{
        \hare{\left\Vert \hxH{s} - \xH{s} \right\Vert}
    }{
        \tort{\left\Vert \hxT{s} - \xT{s} \right\Vert }
    } 
    \leq \rho.
    \label{eq:ratio}
\end{equation}
Here, $\xT{s}$ and $\xH{s}$ denote the analytical solutions to the tortoise and hare equations at timestep $s$, while $\hxT{s}$ and $\hxH{s}$ are the corresponding numerical solutions obtained using the diffusion model solver. 
Given that the solver has order $p$, the local integration error at a single step scales as \cite{hairer1993solving}:
\begin{equation}
    \hat{x}_s - x_s = c \cdot (\Delta t)^{p+1} + \mathcal{O}((\Delta t)^{p+2})
    \label{eq:single_step_error}
\end{equation}
where $\Delta t$ is the fine-grained step size and $c$ is an unknown constant.
Let the coarse step size be $m\Delta t$, meaning the \hare{hare} \textit{leaps} $m$ \tort{tortoise} steps per update.
Then, the local integration error of the \hare{hare equation} over one coarse step becomes:
\begin{equation}
    \hxH{s} - \xH{s}
    = \hare{c^\mathsf{H}} \cdot (m\Delta t)^{p + 1}
    + \mathcal{O} \left((\Delta t)^{p+2}\right).
\end{equation}
In contrast, the \tort{tortoise equation} accumulates error over $m$ fine-grained steps:
\begin{equation}
    \hxT{s} - \xT{s}
    = \tort{c^\mathsf{T}} \cdot m(\Delta t)^{p + 1}
    + \mathcal{O} \left((\Delta t)^{p+2}\right),
\end{equation}
Taking the ratio from Eq.~\ref{eq:ratio} and ignoring higher-order terms, we obtain:
\begin{equation}
    \frac{
        \hare{\left\Vert \hxH{s} - \xH{s} \right\Vert}
    }{
        \tort{\left\Vert \hxT{s} - \xT{s} \right\Vert}
    } 
    =
    \frac{
        \hare{\left\Vert c^\mathsf{H} \right\Vert} m^{p+1} (\Delta t)^{p+1}
    }{
        \tort{\left\Vert c^\mathsf{T} \right\Vert} m (\Delta t)^{p+1}
    } 
    =
    m^p \frac{
        \hare{\left\Vert c^\mathsf{H} \right\Vert}
    }{
        \tort{\left\Vert c^\mathsf{T} \right\Vert}
    }
    \leq \rho, \\
    \quad \therefore
    m \leq \left(
        \rho \tort{\left\Vert c^\mathsf{T} \right\Vert}
        / \hare{\left\Vert c^\mathsf{H} \right\Vert}
    \right)^{1/p}.
\end{equation}
Since $m$ must be a positive integer, we define the maximum allowable value as:
\begin{equation}
    m_\text{max} := \max\left(1, \left\lfloor \left(
        \rho \tort{\Vert c^\mathsf{T} \Vert} / \hare{\Vert c^\mathsf{H} \Vert }
    \right)^{1/p} \right\rfloor\right).
    \label{eq:m_max}
\end{equation}

\paragraph{Estimating the error constants}

\begin{figure}[t]
    \centering
    \includegraphics[width=.9\linewidth]{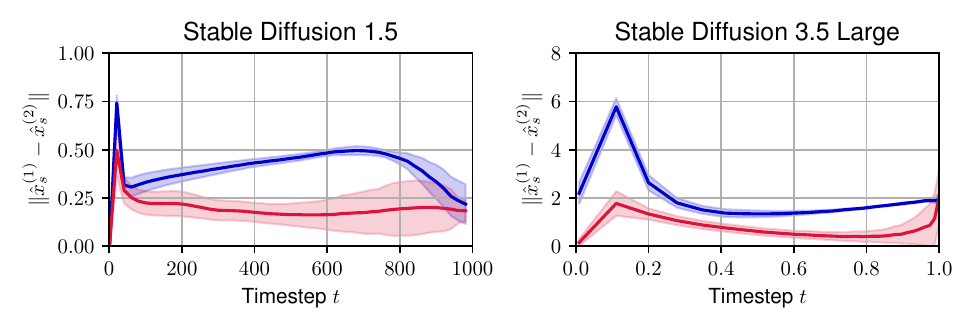}
    \vspace{-10pt}
    \caption{
        \textbf{Approximation error bounds of the tortoise $\xT{t}$ and the hare $\xH{t}$.}
        We show the per‐timestep error bound of the \tort{tortoise} and the \hare{hare} terms across sampling steps.
        The consistently higher bounds for the tortoise curve indicate that the \Tort is more sensitive to timestep resolution than the \Hare.
        Shaded areas denote two standard deviations over multiple prompts.
    }
    \label{fig:error-bound_bh}
\end{figure}

To compute $m_{\max}$, we need estimates of $\tort{\Vert c^\mathsf{T} \Vert}$ and $\hare{\Vert c^\mathsf{H} \Vert }$ without relying on the analytic solution $x_s$.
We accomplish this using the Richardson extrapolation method
\cite{hairer1993solving}
.
First, solve the ODE once using step size $\Delta t$:
\begin{equation}
    \hat x_s^{(1)} - x_s = c \cdot (\Delta t)^{p + 1}
    + \mathcal{O} \left((\Delta t)^{p+2}\right).
    \label{eq:one_step}
\end{equation}
Next, solve again using two steps of size $\Delta t/2$:
\begin{equation}
    \hat x_s^{(2)} - x_s =  c \cdot 2(\Delta t / 2)^{p + 1}
    + \mathcal{O} \left((\Delta t)^{p+2}\right).
    \label{eq:half_step}
\end{equation}
Subtracting Eq.~\ref{eq:one_step} and Eq.~\ref{eq:half_step} yields
\begin{equation}
    \hat x_s^{(1)} - \hat x_s^{(2)} = c \cdot 
    \left(  1-2^{-p} \right) (\Delta t)^{p+1}
    + \mathcal{O} \left((\Delta t)^{p+2}\right).
\end{equation}
If we ignore the higher-order terms, the norm of this difference provides a direct estimate proportional to $\Vert c \Vert$. 
We apply this procedure independently to both the tortoise and hare equations to estimate $\tort{\Vert c^\mathsf{T} \Vert}$ and $\hare{\Vert c^\mathsf{H} \Vert }$, respectively.
Empirical results (Fig.~\ref{fig:error-bound_bh}) on 30,000 prompts from the COCO 2014 dataset \cite{lin2014mscoco, sayak2023coco30k} show that $\tort{\Vert c^\mathsf{T} \Vert}$ is greater than $\hare{\Vert c^\mathsf{H} \Vert }$ for most cases,
confirming that the \tort{tortoise equation} is more sensitive to timestep resolution.

\paragraph{Example usage scenario}

\begin{algorithm}[t]
    \caption{Look before you leap}
    \label{alg:lbyl_bh}
    \begin{algorithmic}[1]
        \Require $m_\text{max}(t_i)$ 
        \Comment{Calculated $m_\text{max}$ for each timestep}
        \Require $\{t_i\}_{0 \leq i \leq N}, t_0 = T, t_N=0$
        \Comment{Fine-grained timestep grid}
        \State $C \leftarrow \{\}$
        \Comment{The result is initially an empty set}
        \State $i \leftarrow 0$
        \Comment{Start advancing the fine-grained grid from the first timestep}
        \While{$i < N$}
            \State $C \leftarrow C \cup \{t_i\}$
            \Comment{Add current position}
            \State $i \leftarrow i + m_\text{max}(t_i)$
            \Comment{Advance $m_\text{max}(t_i)$ steps}
        \EndWhile
        \State $C \leftarrow C \cup \{0\}$
        \Comment{Include last timestep}
        \State \Return $C$
    \end{algorithmic}
\end{algorithm}

\camready{}{
To generate samples using \projectname, we first calculate $m_{\max}$ of each fine-grained timestep (Eq. \ref{eq:m_max}) using the average $\tort{\Vert c^\mathsf{T} \Vert}$ and $\hare{\Vert c^\mathsf{H} \Vert }$ over a batch of inputs.
(More details are covered in Appendix \ref{sec:appendix_richardson}.) %
} %
Then we build the coarse timestep grid $C$ via the ``look before you leap'' strategy (Algorithm~\ref{alg:lbyl_bh}). 
Starting at the first fine-grained timestep $t_0$, we insert coarse timesteps so that they lie $m_{\max}(t_i)$ steps ahead, keeping the local error ratio below $\rho$.
\camready{}{ Finally, we generate samples using Algorithm~\ref{alg:tortoise_hare_bh}. Note that $C$ could be reused for all subsequent inferences without any additional NFEs.}


\subsection{Adjusting Guidance Scales}
\label{sec:adjust}

Approximating the \hare{hare} at fine-grained timesteps can lead to a degradation in output quality.
To compensate for this, we propose adjusting the guidance scale whenever the \Hare term is used more than once per timestep.
In particular, we introduce a constant boost factor $b$ and scale the guidance term:
$\dec \leftarrow b \cdot \dec.$
This simple multiplicative adjustment improves sample quality, especially in cases where the inner loop (which integrates the \hare{hare equation}) is repeated multiple times for each coarse step.
Our method draws inspiration from prior work such as CFG-Cache \cite{lv2024fastercache}, which amplifies guidance in the frequency domain using FFT.
However, unlike FFT-based methods, our approach avoids the overhead of spectral transforms, which can be computationally expensive for high-dimensional latent variables. 
\suggest{}{
The \Hare term predominantly contains low-frequency information in the early stages of sampling and vice versa \cite{haji2024elasticdiffusion}.
}
Therefore, selectively enhancing the frequency components of the \Hare term per timestep has low significance.

Furthermore, CFG and the \Hare term are of low significance at the later phase of the reverse diffusion process \cite{kynkaanniemi2024limited, castillo2023adaptive}.
We leverage this fact by introducing a threshold timestep \camready{value $t_\text{hi}$}{index $i_\text{hi}$} and substituting  
$\dec \leftarrow 0$ if \camready{}{$i \geq i_\text{hi}$.}
This simple adjustment helps reduce the NFE even further.

\section{Experiments}
\label{sec:experiments}

\begin{table}[t]
    \caption{
        \camready{Comparison of methods in terms of visual quality on the COCO 2014 dataset.}{Comparison of methods in terms of distributional similarity and prompt fidelity.}
        Our method is marked in \colorbox{blue!10}{blue}, whereas vanilla CFG is marked in \colorbox{gray!20}{gray}.
        The best results are \textbf{highlighted}.
    }
    \centering
    \begin{tabularx}{\textwidth}{l*{5}{>{\centering\arraybackslash}X}}
        \toprule
        \multirow{2}{*}{Method} & \multirow{2}{*}{NFE $\downarrow$}
            & \multicolumn{2}{c}{Distributional similarity} & \multicolumn{2}{c}{Prompt fidelity} \\
        & & FID $\downarrow$ & CMMD $\downarrow$ & CS $\uparrow$ & IR $\uparrow$ \\
        \midrule
        \multicolumn{6}{l}{\textit{Stable Diffusion 1.5 with DDIM}} \\
        \rowcolor{gray!20} CFG~\cite{ho2022cfg}
            & 100 & 14.057 & 0.58885 & 26.294 & 0.14765 \\
        CFG-Cache w/o FFT~\cite{lv2024fastercache}
            & 70 & 14.240 & \textbf{0.59187} & 26.141 & 0.08757 \\
        CFG-Cache~\cite{lv2024fastercache}
            & 70 & 14.367 & 0.59556 & {26.180} & {0.09735} \\
        \rowcolor{blue!10} THG (Ours)
            & 70 & \textbf{14.165} & {0.59223} & \textbf{26.189} & \textbf{0.11499} \\
        \midrule
        \multicolumn{6}{l}{\textit{Stable Diffusion 1.5 with DPM-Solver-2}~\cite{lu2022dpm}} \\
        \rowcolor{gray!20} CFG~\cite{ho2022cfg}
            & 100	& 13.255	& 0.60379	& 26.254	&  0.16148 \\
        CFG-Cache w/o FFT~\cite{lv2024fastercache} 
            & 70	& {13.387}	& \textbf{0.60665}	& 26.107	&  0.10513 \\
        CFG-Cache~\cite{lv2024fastercache}	
            & 70	& 13.468	& 0.60880	& {26.147}	&  {0.11474} \\
        \rowcolor{blue!10} THG (Ours)
            & 70	& \textbf{12.909}	& {0.60868}	& \textbf{26.205}	&  \textbf{0.14926} \\
        \midrule
        \multicolumn{6}{l}{\textit{Stable Diffusion 1.5 with 2nd-order Linear Multistep Method}~\cite{liu2022pseudo}} \\
        \rowcolor{gray!20} CFG~\cite{ho2022cfg}
            & 100	& 13.540	& 0.60653	& 26.260	& 0.15966 \\
        CFG-Cache w/o FFT~\cite{lv2024fastercache} 
            & 70	& \textbf{13.686}	& \textbf{0.60844}	& 26.107	& 0.09881 \\
        CFG-Cache~\cite{lv2024fastercache}	
            & 70	& 13.798	& 0.61142	& {26.144}	& {0.10805} \\
        \rowcolor{blue!10} THG (Ours)
            & 70	& \textbf{13.686}	& {0.61094}	& \textbf{26.204}	& \textbf{0.15184} \\
        \midrule
        \multicolumn{6}{l}{\textit{Stable Diffusion 3.5 Large with Euler method}} \\
        \rowcolor{gray!20} CFG~\cite{ho2022cfg}              
            & 56 & 68.158 & 0.81106 & 26.624 & 1.03569 \\
        CFG-Cache w/o FFT~\cite{lv2024fastercache}               
            & 38 & {67.931} & {0.76448} & 26.643 & 1.00715 \\
        CFG-Cache~\cite{lv2024fastercache}                   
            & 38 & \textbf{67.914} & \textbf{0.75324} & {26.668} & {1.00745} \\
        \rowcolor{blue!10} THG (Ours)
            & 38 & 68.252 & 0.80092 & \textbf{26.672} & \textbf{1.02365} \\
        \bottomrule
        \toprule
        \multirow{2}{*}{Method} & \multirow{2}{*}{NFE $\downarrow$}
            & \multicolumn{2}{c}{Distributional similarity} & \multicolumn{2}{c}{Prompt fidelity} \\
        & & \multicolumn{2}{c}{FAD $\downarrow$} & \multicolumn{2}{c}{CLAP Score $\uparrow$} \\
        \midrule
        \multicolumn{6}{l}{\textit{AudioLDM 2 with DDIM}} \\
        \rowcolor{gray!20} CFG~\cite{ho2022cfg}
            & 100	& \multicolumn{2}{c}{2.596} & \multicolumn{2}{c}{0.2409} \\
        CFG-Cache w/o FFT~\cite{lv2024fastercache} 
            & 70	& \multicolumn{2}{c}{2.901} & \multicolumn{2}{c}{0.2251} \\
        \rowcolor{blue!10} THG (Ours)
            & 70	& \multicolumn{2}{c}{\textbf{2.764}} & \multicolumn{2}{c}{\textbf{0.2342}} \\
        \bottomrule
    \end{tabularx}
    \label{tab:main}
\end{table}

\begin{figure}[t]
  \centering

\begin{subfigure}[b]{\linewidth}
  \centering
  \caption*{\textit{SD 1.5 with DDIM}, Prompt: \textbf{A group of zebras grazing in the grass.}}
  \begin{subfigure}[b]{0.24\linewidth}
    \centering
    \begin{overpic}[width=\linewidth]{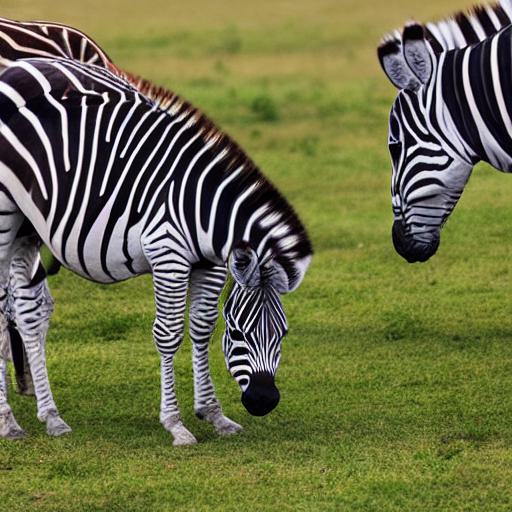}
        \put(75,25){\color{red}\linethickness{2pt}\framebox(20,30){}}
        \put(100,0){%
            \makebox(0,0)[rb]{%
              \colorbox{black!70}{\strut\textcolor{white}{\scriptsize \textsf{NFE = 100}}}%
            }%
        }
    \end{overpic}
  \end{subfigure}%
  \hfill
  \begin{subfigure}[b]{0.24\linewidth}
    \centering
        \begin{overpic}[width=\linewidth]{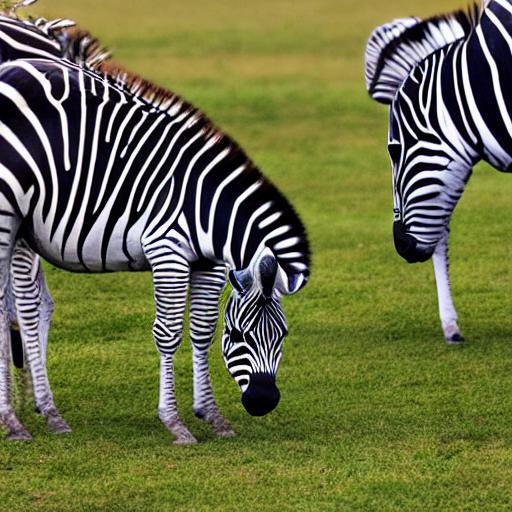}
        \put(75,25){\color{red}\linethickness{2pt}\framebox(20,30){}}
        \put(100,0){%
            \makebox(0,0)[rb]{%
              \colorbox{black!70}{\strut\textcolor{white}{\scriptsize \textsf{NFE = 70}}}%
            }%
        }
    \end{overpic}
  \end{subfigure}%
  \hfill
  \begin{subfigure}[b]{0.24\linewidth}
    \centering
        \begin{overpic}[width=\linewidth]{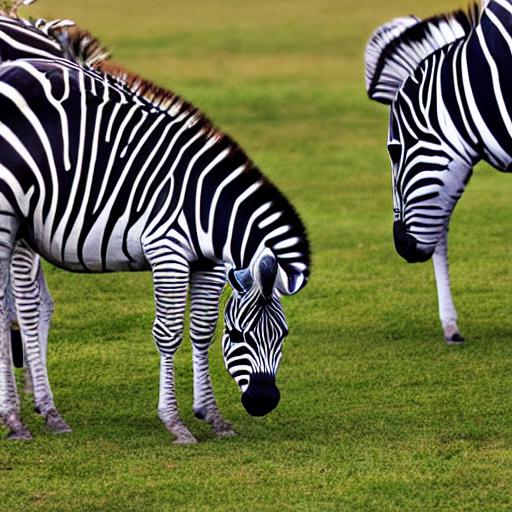}
        \put(75,25){\color{red}\linethickness{2pt}\framebox(20,30){}}
        \put(100,0){%
            \makebox(0,0)[rb]{%
              \colorbox{black!70}{\strut\textcolor{white}{\scriptsize \textsf{NFE = 70}}}%
            }%
        }
    \end{overpic}
  \end{subfigure}%
  \hfill
  \begin{subfigure}[b]{0.24\linewidth}
    \centering
        \begin{overpic}[width=\linewidth]{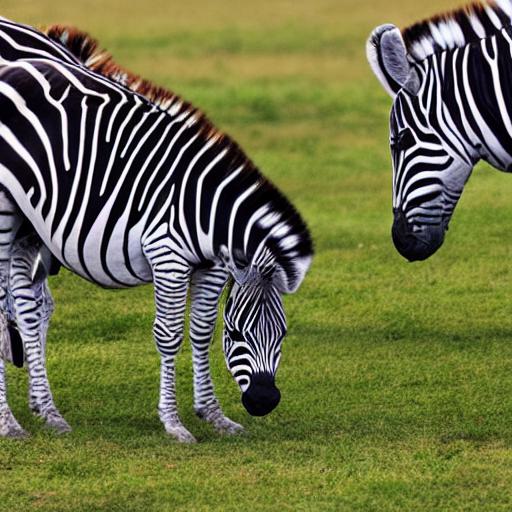}
        \put(75,25){\color{red}\linethickness{2pt}\framebox(20,30){}}
        \put(100,0){%
            \makebox(0,0)[rb]{%
              \colorbox{black!70}{\strut\textcolor{white}{\scriptsize \textsf{NFE = 70}}}%
            }%
        }
    \end{overpic}
  \end{subfigure}
\end{subfigure}

\begin{subfigure}[b]{\linewidth}
  \centering
  \caption*{\textit{SD 1.5 with DDIM}, Prompt: \textbf{Two cows on a hill above a valley and mountains on the other side.}}
  \begin{subfigure}[b]{0.24\linewidth}
    \centering
    \begin{overpic}[width=\linewidth]{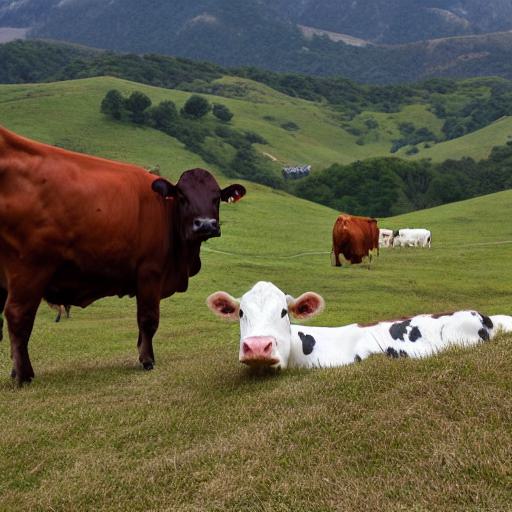}
        \put(30,50){\color{red}\linethickness{2pt}\framebox(20,20){}}
        \put(100,0){%
            \makebox(0,0)[rb]{%
              \colorbox{black!70}{\strut\textcolor{white}{\scriptsize \textsf{NFE = 100}}}%
            }%
        }
    \end{overpic}
  \end{subfigure}%
  \hfill
  \begin{subfigure}[b]{0.24\linewidth}
    \centering
        \begin{overpic}[width=\linewidth]{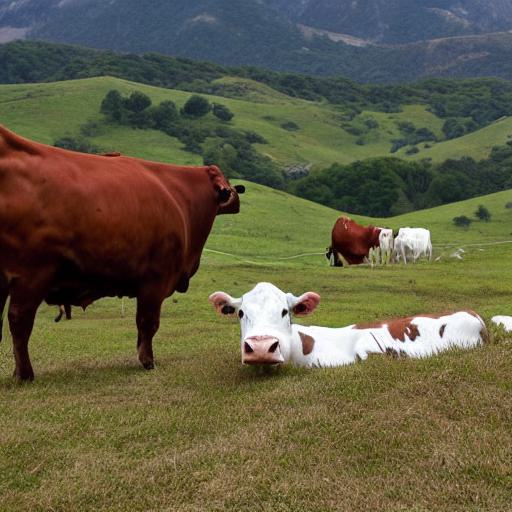}
        \put(30,50){\color{red}\linethickness{2pt}\framebox(20,20){}}
        \put(100,0){%
            \makebox(0,0)[rb]{%
              \colorbox{black!70}{\strut\textcolor{white}{\scriptsize \textsf{NFE = 70}}}%
            }%
        }
    \end{overpic}
  \end{subfigure}%
  \hfill
  \begin{subfigure}[b]{0.24\linewidth}
    \centering
        \begin{overpic}[width=\linewidth]{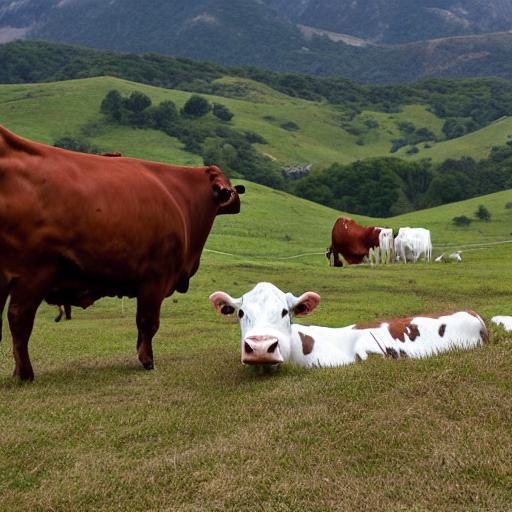}
        \put(30,50){\color{red}\linethickness{2pt}\framebox(20,20){}}
        \put(100,0){%
            \makebox(0,0)[rb]{%
              \colorbox{black!70}{\strut\textcolor{white}{\scriptsize \textsf{NFE = 70}}}%
            }%
        }
    \end{overpic}
  \end{subfigure}%
  \hfill
  \begin{subfigure}[b]{0.24\linewidth}
    \centering
        \begin{overpic}[width=\linewidth]{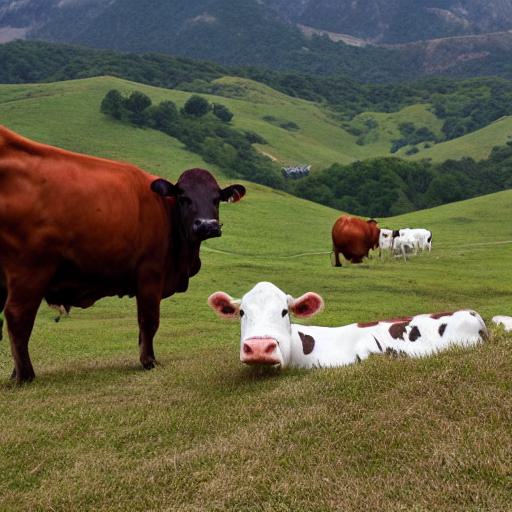}
        \put(30,50){\color{red}\linethickness{2pt}\framebox(20,20){}}
        \put(100,0){%
            \makebox(0,0)[rb]{%
              \colorbox{black!70}{\strut\textcolor{white}{\scriptsize \textsf{NFE = 70}}}%
            }%
        }
    \end{overpic}
  \end{subfigure}
\end{subfigure}

    \begin{subfigure}[b]{\linewidth}
      \centering
      \caption*{\textit{SD 3.5 Large with Euler method}, Prompt: \textbf{A single giraffe standing in the middle of tall grass}}
      \begin{subfigure}[b]{0.24\linewidth}
        \centering
        \begin{overpic}[width=\linewidth]{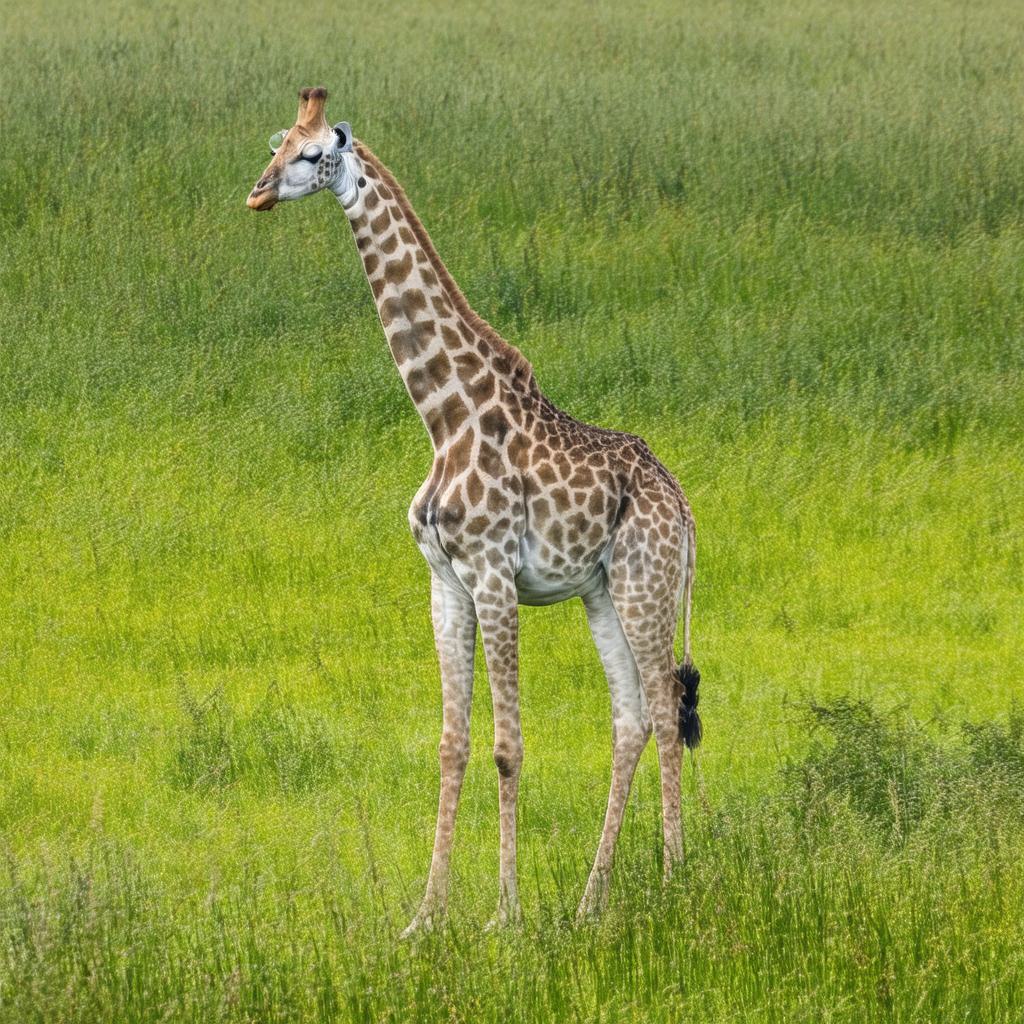}
        \put(100,0){%
            \makebox(0,0)[rb]{%
              \colorbox{black!70}{\strut\textcolor{white}{\scriptsize \textsf{NFE = 56}}}%
            }%
        }
        \end{overpic}
      \end{subfigure}
      \hfill
      \begin{subfigure}[b]{0.24\linewidth}
        \centering
                \begin{overpic}[width=\linewidth]{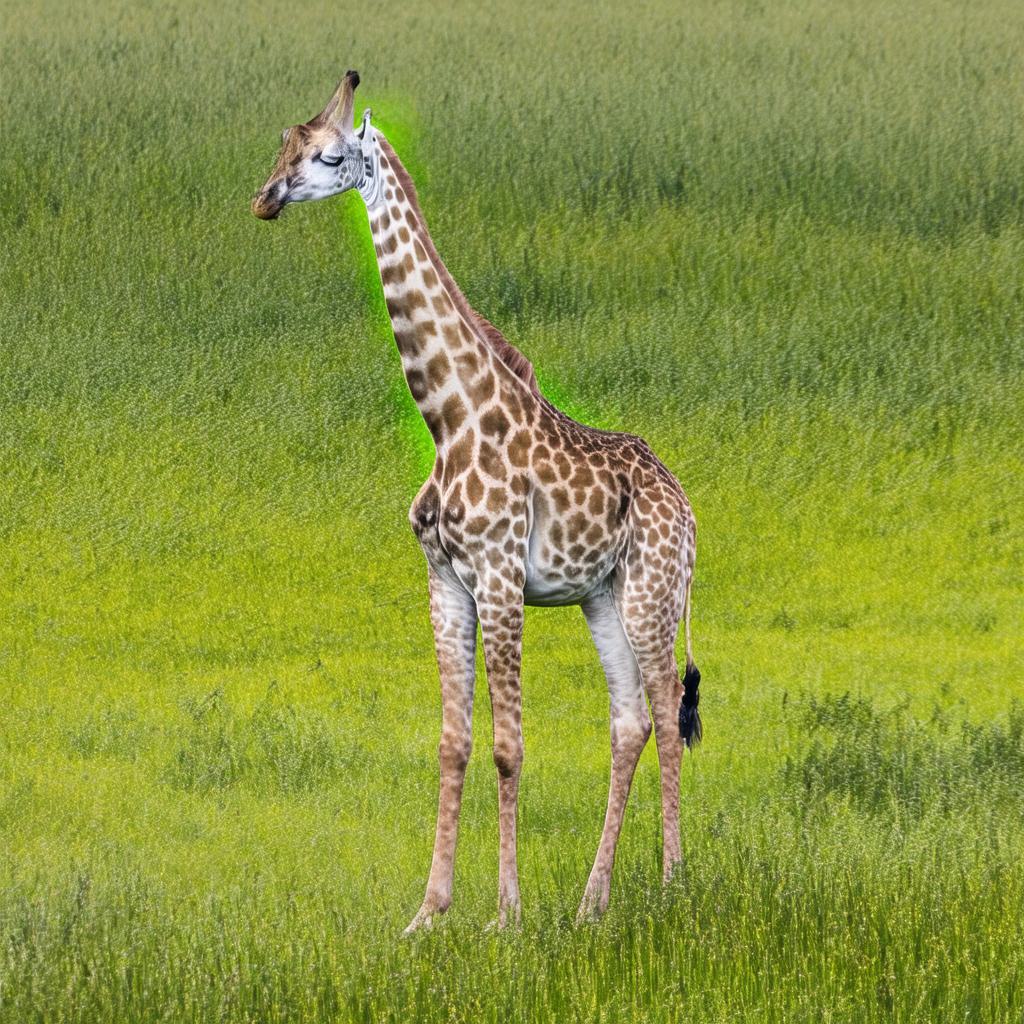}
        \put(100,0){%
            \makebox(0,0)[rb]{%
              \colorbox{black!70}{\strut\textcolor{white}{\scriptsize \textsf{NFE = 38}}}%
            }%
        }
        \end{overpic}
      \end{subfigure}
      \hfill
      \begin{subfigure}[b]{0.24\linewidth}
        \centering
                \begin{overpic}[width=\linewidth]{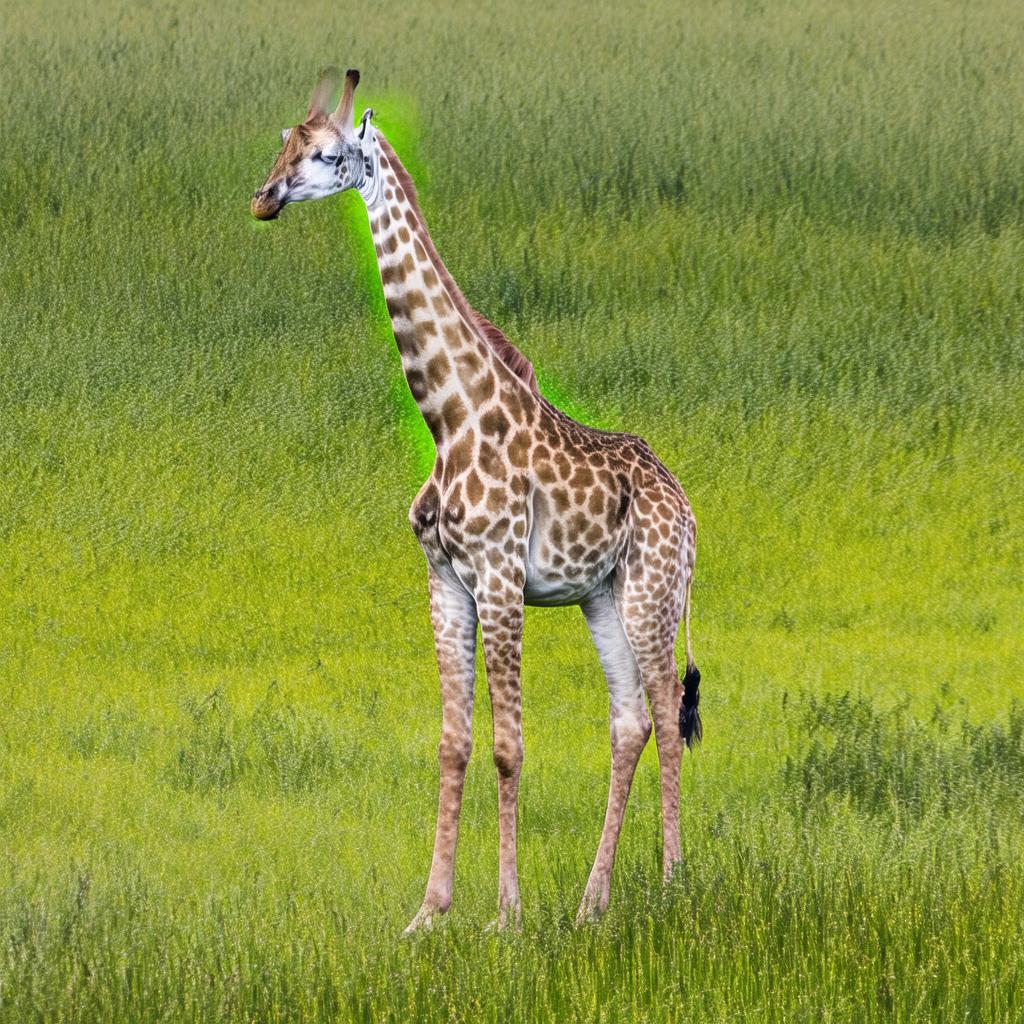}
        \put(100,0){%
            \makebox(0,0)[rb]{%
              \colorbox{black!70}{\strut\textcolor{white}{\scriptsize \textsf{NFE = 38}}}%
            }%
        }
        \end{overpic}
      \end{subfigure}
      \hfill
      \begin{subfigure}[b]{0.24\linewidth}
        \centering
                \begin{overpic}[width=\linewidth]{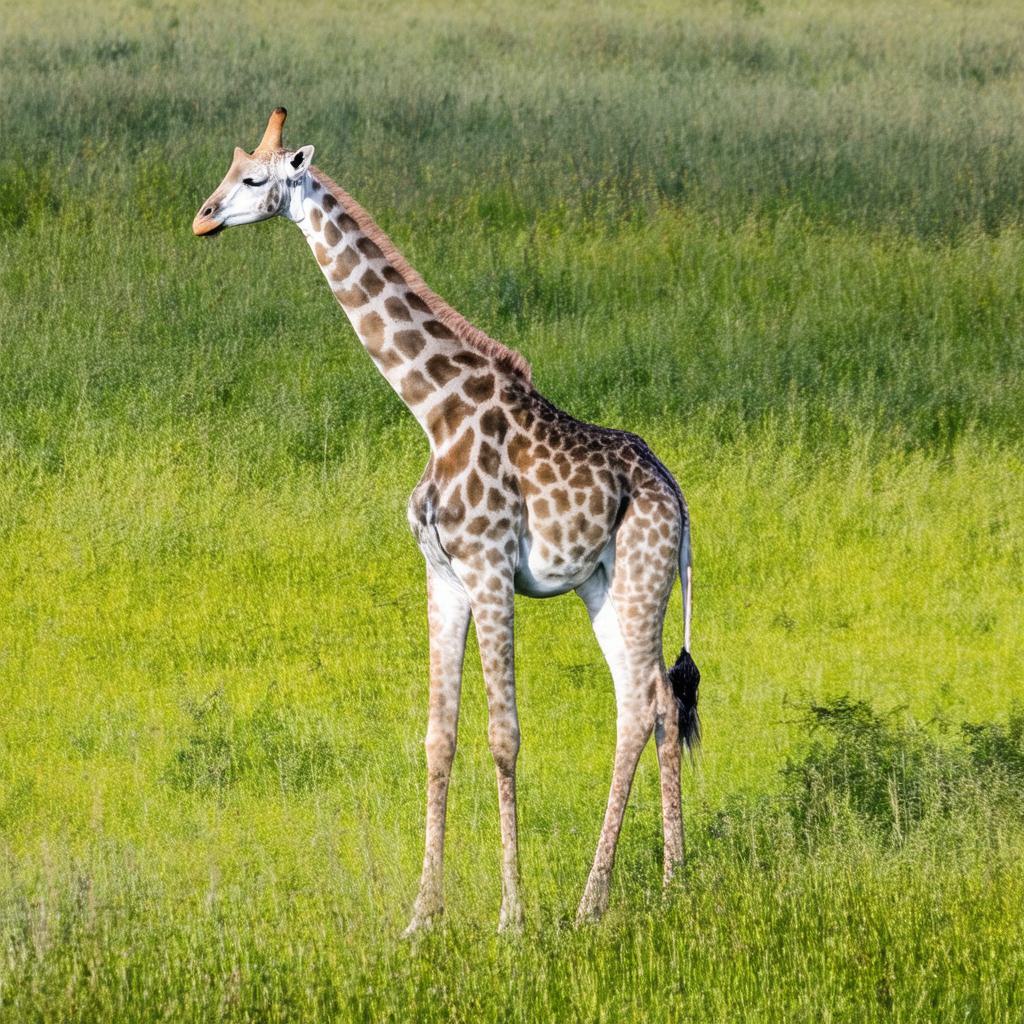}
        \put(100,0){%
            \makebox(0,0)[rb]{%
              \colorbox{black!70}{\strut\textcolor{white}{\scriptsize \textsf{NFE = 38}}}%
            }%
        }
        \end{overpic}
      \end{subfigure}
  \end{subfigure}

    \begin{subfigure}[b]{\linewidth}
      \centering
      \caption*{\textit{SD 3.5 Large with Euler method}, Prompt: \textbf{A bus that sign reads ``Crosstown''. It is a metro bus.}}
      \begin{subfigure}[b]{0.24\linewidth}
        \centering
        \begin{overpic}[width=\linewidth]{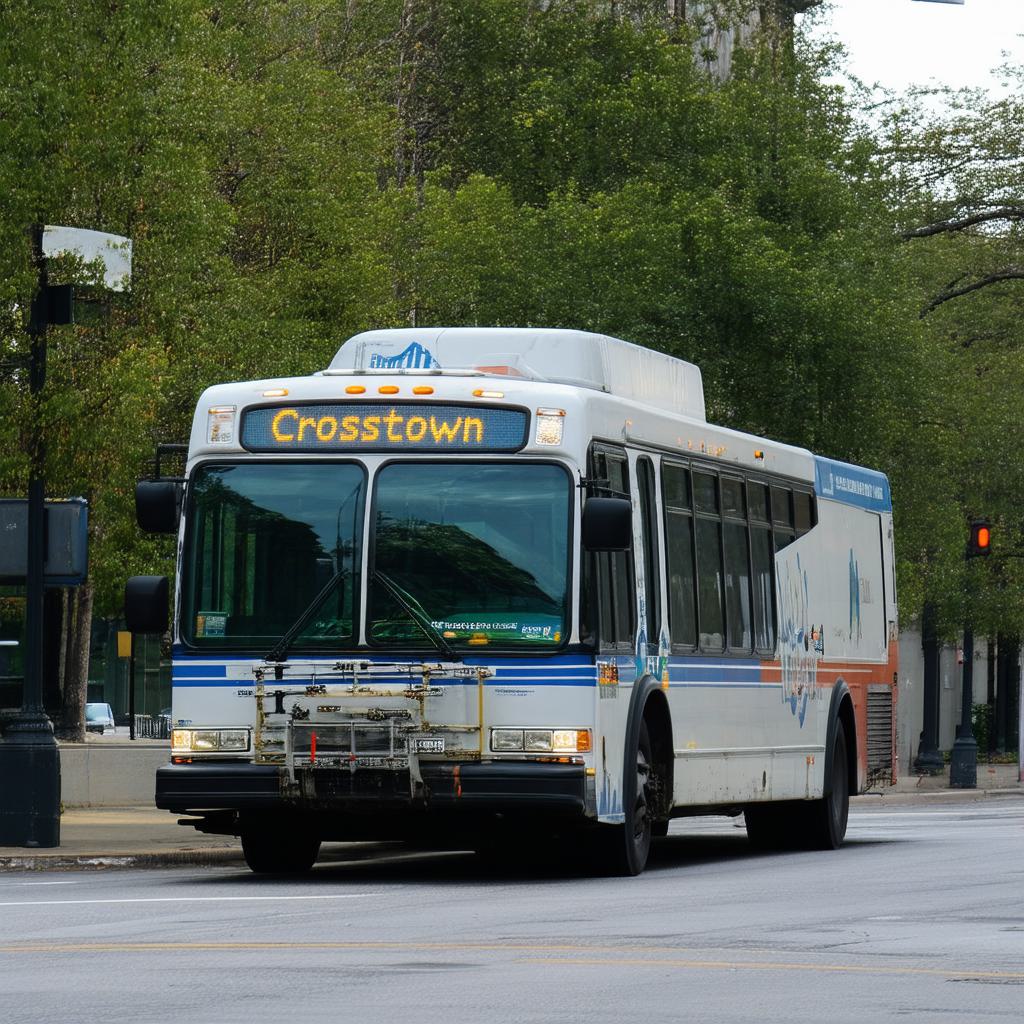}
            \put(22, 53){\color{red}\linethickness{2pt}\framebox(30, 10){}}
            \put(100,0){%
            \makebox(0,0)[rb]{%
              \colorbox{black!70}{\strut\textcolor{white}{\scriptsize \textsf{NFE = 56}}}%
            }%
            }
        \end{overpic}
        \caption{CFG}
      \end{subfigure}
      \hfill
      \begin{subfigure}[b]{0.24\linewidth}
        \centering
        \begin{overpic}[width=\linewidth]{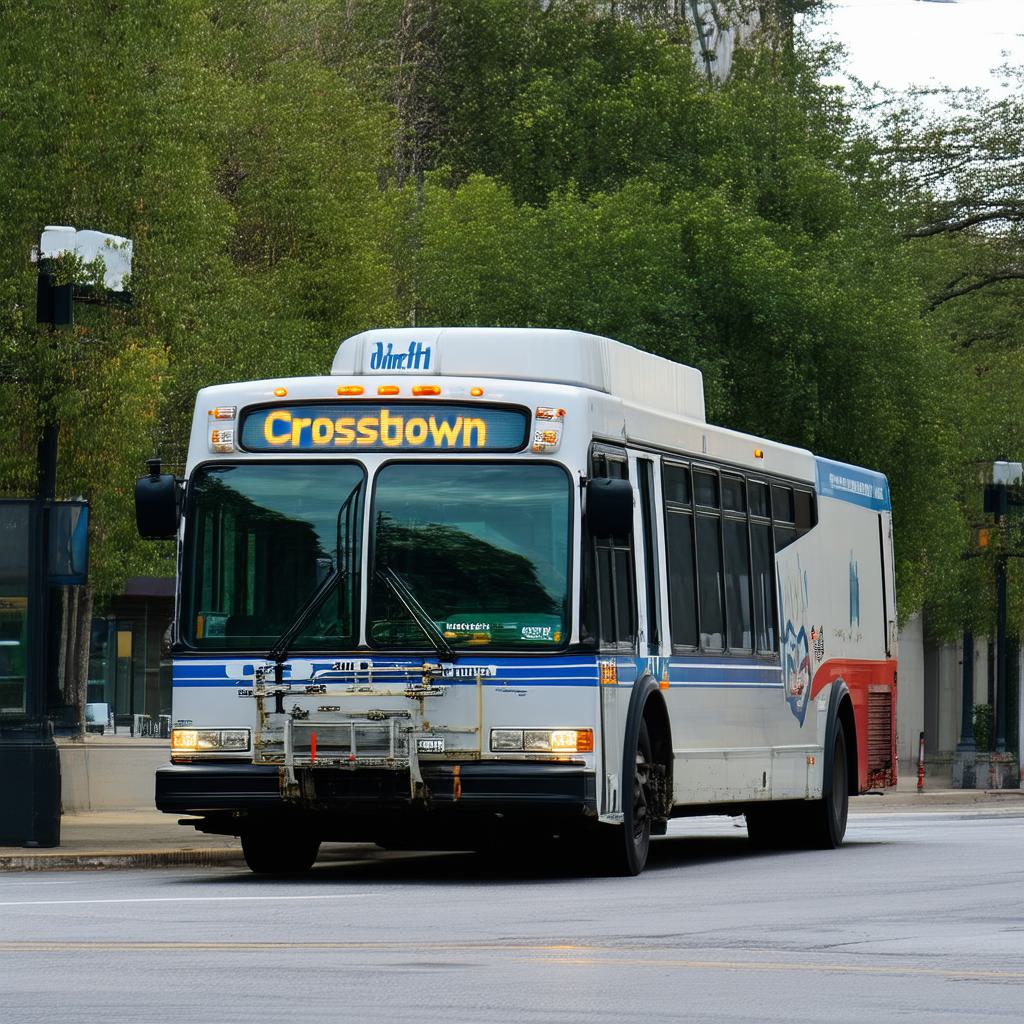}
            \put(22, 53){\color{red}\linethickness{2pt}\framebox(30, 10){}}
            \put(100,0){%
            \makebox(0,0)[rb]{%
              \colorbox{black!70}{\strut\textcolor{white}{\scriptsize \textsf{NFE = 38}}}%
            }%
            }
        \end{overpic}
        \caption{CFG-Cache w/o FFT}
      \end{subfigure}
      \hfill
      \begin{subfigure}[b]{0.24\linewidth}
        \centering
        \begin{overpic}[width=\linewidth]{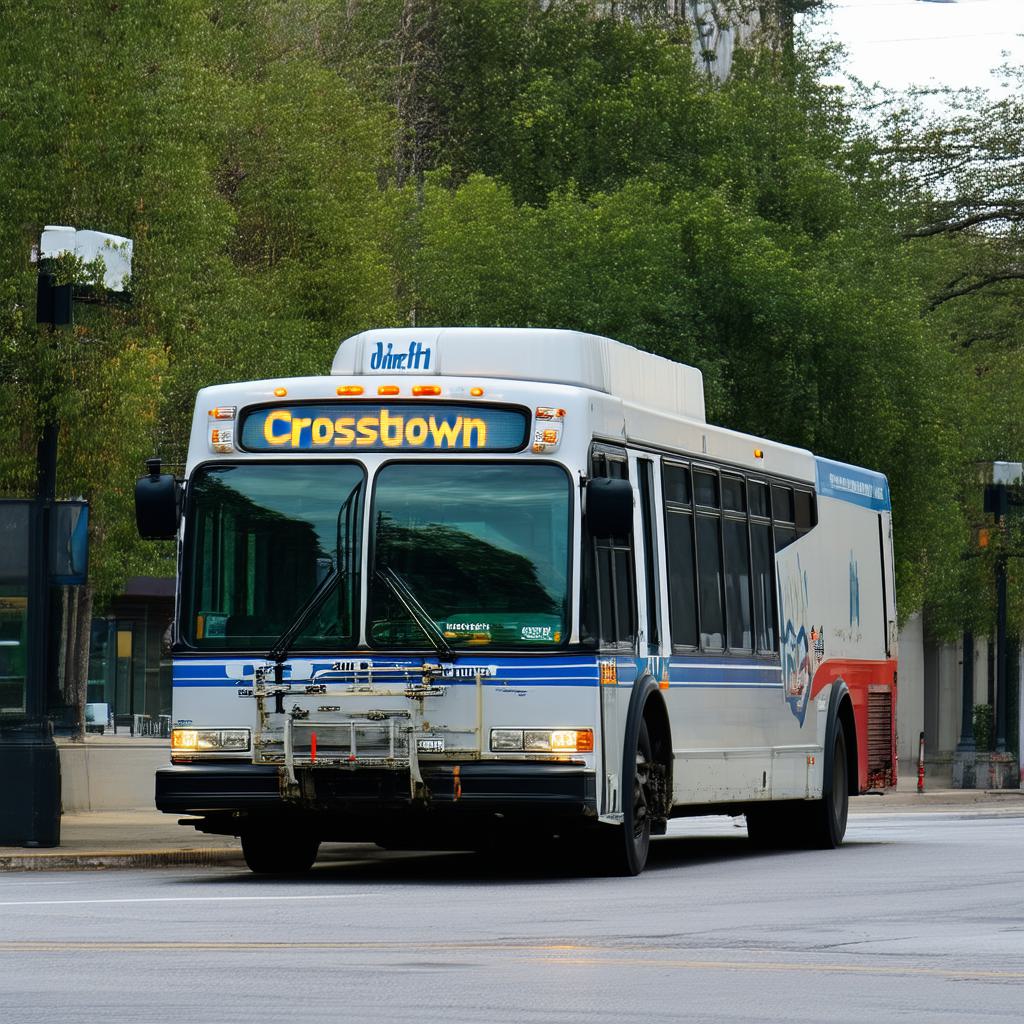}
            \put(22, 53){\color{red}\linethickness{2pt}\framebox(30, 10){}}
            \put(100,0){%
            \makebox(0,0)[rb]{%
              \colorbox{black!70}{\strut\textcolor{white}{\scriptsize \textsf{NFE = 38}}}%
            }%
            }
        \end{overpic}
        \caption{CFG-Cache}
      \end{subfigure}
      \hfill
      \begin{subfigure}[b]{0.24\linewidth}
        \centering
        \begin{overpic}[width=\linewidth]{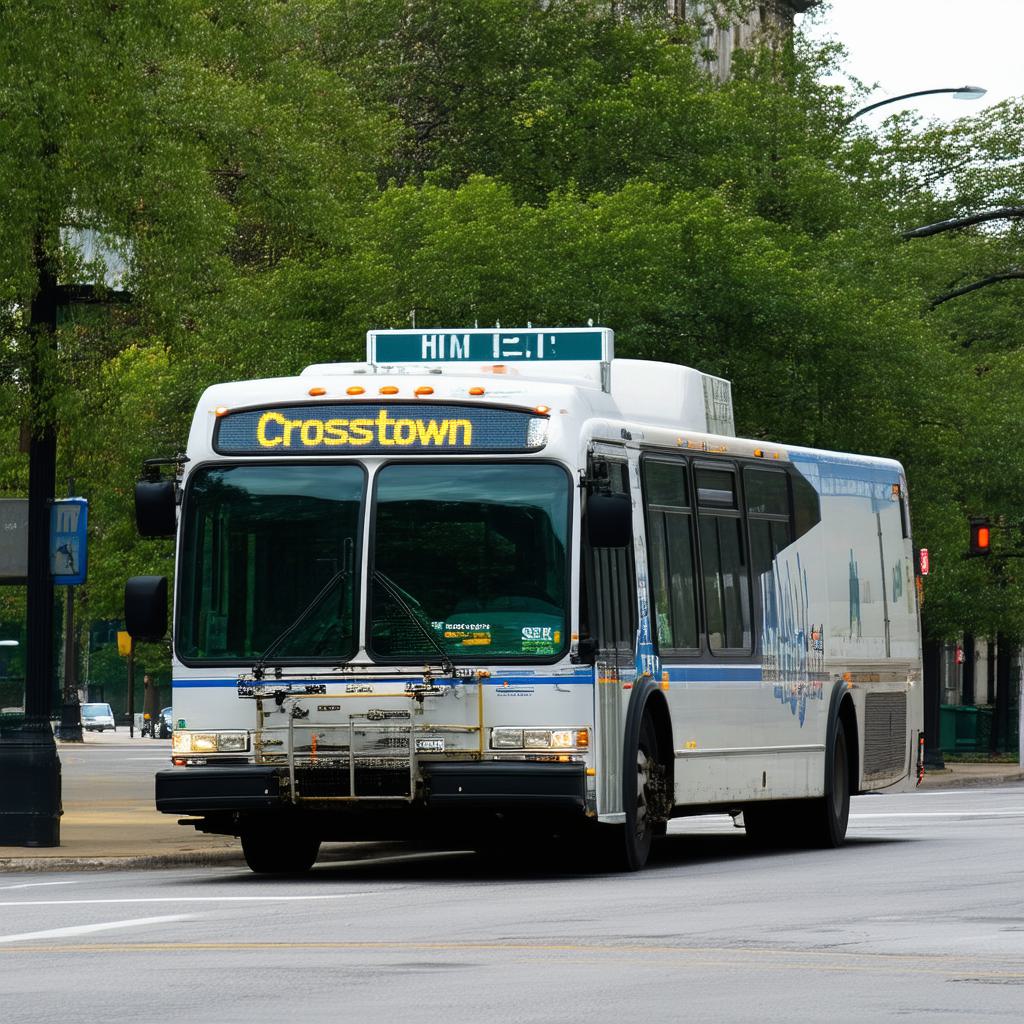}
            \put(22, 53){\color{red}\linethickness{2pt}\framebox(30, 10){}}
            \put(100,0){%
            \makebox(0,0)[rb]{%
              \colorbox{black!70}{\strut\textcolor{white}{\scriptsize \textsf{NFE = 38}}}%
            }%
            }
        \end{overpic}
        \caption{THG (Ours)}
      \end{subfigure}
  \end{subfigure}

  \caption{\textbf{Comparison of visual results} for prompts from the COCO 2014 dataset.}
  \label{fig:qual}
\end{figure}

\subsection{Experimental Settings}

\paragraph{Compared methods}
To demonstrate the effectiveness of our approach, we compare against CFG-Cache \cite{lv2024fastercache}, a training-free acceleration technique that reuses conditional and unconditional outputs in video diffusion models. 
%
%
Given that CFG-Cache exploits a timestep-adaptive enhancement technique to mitigate fine-detail degradation, we evaluate both the full CFG-Cache (with enhancement) and a variant without this enhancement (denoted “CFG-Cache w/o FFT”). 
All variants are adapted to \camready{image diffusion models}{the diffusion model's modality} for a fair comparison.
More comparisons with CFG variants are given in Appendix \ref{sec:appendix_methods}.

\paragraph{Implementation details}
We build \projectname with PyTorch \cite{paszke2019pytorch}, Diffusers \cite{vonplaten2022diffusers}, and Accelerate \cite{gugger2022accelerate}. 
%
%
\camready{}{We evaluate three pretrained diffusion models—Stable Diffusion 1.5 \cite{rombach2022ldm}, Stable Diffusion 3.5 Large \cite{stabilityai2024sd35l, esser2024scaling}, and AudioLDM 2 \cite{liu2024audioldm}.}
For Stable Diffusion (SD) models, 
we use prompt–image pairs randomly sampled from COCO 2014 \cite{lin2014mscoco, sayak2023coco30k}: 30,000 pairs for SD 1.5 and 1,000 pairs for SD 3.5 Large.
\camready{}{For AudioLDM 2, we use 2,230 prompt-audio pairs from the validation set of AudioCaps \cite{kim2019audiocaps}.}
Experiments are run on a server with an AMD EPYC 74F3 26934-core CPU, 1 TB of RAM, and 8 NVIDIA A100 80GB GPUs.
\camready{}{Hyperparameters $(N, \omega, \rho, b, i_\text{hi})$ are set to $(50, 7.5, 1.1, 1.1, 38)$ for SD 1.5, $(28, 3.5, 1.0, 1.2, 21)$ for SD 3.5 Large, and $(50, 3.5, 0.9, 1.15, 39)$ for AudioLDM 2.}
%

\subsection{Main Results}

\paragraph{Quantitative comparison}
Table~\ref{tab:main} compares our method to the CFG-Cache variants in terms of distributional similarity metrics such as FID \cite{heusel2017fid, parmar2021cleanfid}, CMMD \cite{jayasumana2024cmmd}, and FAD \cite{kilgour2019frechet}, together with prompt fidelity metrics such as CLIP Score (CS) \cite{hessel2021clipscore}, ImageReward (IR) \cite{xu2023imagereward}, and CLAP Score \cite{wu2023large} under the same number of function evaluations (NFE).
%
Refer to Appendix \ref{sec:appendix_metrics} for more details on metrics.
On SD 1.5, all methods cut NFE from 100 to 70; ours lowers FID (14.165 vs. 14.240), matches CMMD, and improves CS and IR over CFG-Cache w/o FFT, and beats full CFG-Cache on CS and IR while keeping FID competitive. 
On SD 3.5 Large, all cut NFE from 56 to 38; although CFG-Cache slightly leads on FID and CMMD, our method delivers nearly equal FID/CMMD with the highest IR and tied CS. 
\camready{}{On AudioLDM 2~\cite{liu2024audioldm}, all cut NFE from 100 to 70; ours lowers FAD (2.764 vs. 2.901) and improves CLAP Score over CFG-Cache w/o FFT. CFG-Cache is excluded since its enhancement is inapplicable to the audio domain.}
These results show that THG generalizes across solvers and scales, preserving sample distribution and text alignment under aggressive step reduction.
The tradeoff of distributional similarity and prompt fidelity is further discussed in Appendix \ref{sec:appendix_tradeoff}.

\paragraph{Qualitative comparison}

Figure \ref{fig:qual} compares images generated by our method and the two CFG-Cache variants. 
The results demonstrate that THG effectively preserves image fidelity and fine details.
\camready{}{More visual comparisons are shown in Appendix \ref{sec:appendix_qual}.}

\subsection{Ablation Studies}

\begin{table}[t]
    \centering
    \small
    \setlength{\tabcolsep}{2.pt}
    \begin{minipage}[t]{0.493\textwidth}
        \centering
        \caption{Ablation study for the hyperparameter $b$.}
        \label{tab:abl_b}
        \begin{tabular}{lccccc}
            \toprule
            Method & NFE $\downarrow$ & FID $\downarrow$ & CMMD $\downarrow$ & CS $\uparrow$ & IR $\uparrow$ \\
            \midrule
            $b=1.00$ & 70 & 13.811 & 0.58364 & 26.137 & 0.09395 \\
            $b=1.05$ & 70 & 13.988 & 0.58794 & 26.162 & 0.10456 \\
            \rowcolor{blue!10}
            $b=1.10$ & 70 & 14.232 & 0.59354 & 26.197 & 0.11576 \\
            $b=1.15$ & 70 & 14.472 & 0.59783 & 26.221 & 0.12639 \\
            $b=1.20$ & 70 & 14.729 & 0.60260 & 26.246 & 0.13478 \\
            \bottomrule
        \end{tabular}
    \end{minipage}
    \hfill
    \begin{minipage}[t]{0.493\textwidth}
        \centering
        \caption{Ablation study for the hyperparameter $\rho$.}
        \label{tab:abl_rho}
        \begin{tabular}{lccccc}
            \toprule
            Method & NFE $\downarrow$ & FID $\downarrow$ & CMMD $\downarrow$ & CS $\uparrow$ & IR $\uparrow$ \\
            \midrule
            $\rho=0.9$ & 75 & 14.128 & 0.59044 & 26.193 & 0.11942 \\
            $\rho=1.0$ & 73 & 14.148 & 0.59068 & 26.200 & 0.11949 \\
            \rowcolor{blue!10}
            $\rho=1.1$ & 70 & 14.232 & 0.59354 & 26.197 & 0.11576 \\
            $\rho=1.2$ & 69 & 14.336 & 0.59306 & 26.221 & 0.11262 \\
            $\rho=1.3$ & 67 & 14.280 & 0.59521 & 26.197 & 0.10849 \\
            \bottomrule
        \end{tabular}
    \end{minipage}
\end{table}

\paragraph{Boost factor $b$}
\camready{}{We perform ablation studies on hyperparameters using SD 1.5 with DDIM.}
Table~\ref{tab:abl_b} shows how varying the boost factor $b$ affects inference quality at 70 NFE budget with the same latents $x_T$.
As $b$ increases from 1.00 to 1.20, we observe a steady rise in IR from 0.09395 up to 0.13478, indicating stronger image–text alignment, and a modest gain in CS.
However, this comes at the cost of higher FID and CMMD values, reflecting a gradual drop in distributional similarity.
We select $b=1.10$ as our default because it strikes the best balance: it substantially boosts IR (0.11576) with only a moderate increase in FID (14.232) and CMMD (0.59354) relative to lower $b$ values.

\paragraph{Error‐ratio threshold $\rho$}
Table~\ref{tab:abl_rho} summarizes the effect of varying $\rho$ with the same latents $x_T$.
Lowering $\rho$ from 1.1 to 0.9 results in more conservative \hare{hare} leaps---NFE rise from 70 to 75---and yields slightly better FID (14.128 vs. 14.232) and CMMD (0.59044 vs. 0.59354), at the expense of marginally lower IR (0.11942 vs. 0.11576). 
Increasing $\rho$ to 1.3 reduces NFE to 67 but degrades FID (14.280) and IR (0.10849).
We choose $\rho=1.1$ as our default since it achieves the best trade‐off: a 30\% NFE reduction (70 NFE) while maintaining competitive fidelity and alignment metrics.

\section{Conclusion}
\label{sec:conclusion}

We \suggest{presented}{present} \projectname, a training-free acceleration framework for diffusion sampling that leverages a multirate reformulation of classifier-free guidance (CFG). 
Exploiting the asymmetric sensitivity of the \Tort and the \Hare term to numerical error, \projectname integrates the \Tort on a fine-grained grid while integrating the \Hare term on a coarse grid. 
This approach allows for a substantial reduction in the number of function evaluations (NFE) without sacrificing generation quality.
With an error-bound-aware timestep sampler and a guidance scale adjustment, our method achieves up to 30\% faster sampling while preserving fidelity across models like \camready{Stable Diffusion 1.5 and 3.5 Large}{Stable Diffusion 1.5, 3.5 Large, and AudioLDM 2}, demonstrating the effectiveness of multirate integration for real-time high-quality generation.

\paragraph{Limitations}
\camready{}{Our experiments are currently limited to latent diffusion models and a few benchmark datasets such as COCO 2014 and AudioCaps.}
Extending the evaluation to a wider range of architectures, modalities, and downstream tasks will help assess the generality and robustness of our method.

\paragraph{Broader Impact}
By reducing sampling cost without retraining, \projectname lowers the barrier to deploying diffusion models in real-time applications such as creative tools, accessibility services, and mobile environments. 
This could result in accelerating the production of synthetic media, including deepfakes and misleading content.
Nonetheless, the capabilities of \projectname remain bounded by those of the underlying diffusion model, introducing a limited impact to the quality of such synthetic media.

\begin{ack}


\camready{}{
We sincerely thank Byeongju Woo, Chanyong Lee, and Eunjin Koh for their constructive discussions and support. We also appreciate Minseo Kim and Minkyu Song for providing insightful feedback. This work was supported by the Agency For Defense Development Grant Funded by the Korean Government (912A45701).
}

\end{ack}


{
\small
\bibliographystyle{plainnat}
\bibliography{ref}



}


\newpage

\appendix

\section{\texorpdfstring{$v$}{v}-prediction models}
\label{sec:appendix_vpred}

Recent models such as Stable Diffusion 3.5 \cite{stabilityai2024sd35l} directly infer $v$, or the \textit{velocity field} of the reverse diffusion process. 
The diffusion ODE is then defined as 
\begin{equation}
    \frac{\dd }{\dd t} x_t = \hat v_\theta (x_t), 
    \quad x_T \sim \mathcal{N}(0, I).
    \label{eq:vpred_ode}
\end{equation}
By the definition of CFG \cite{ho2022cfg}, we have
\begin{equation}
    \hat v_\theta (x_t)
    :=
    \hat v_\varnothing (x_t) + \omega \cdot \left(
        \hat v_c (x_t) - \hat v_\varnothing (x_t)
    \right)
    \equiv 
    \tort{\hat v_c (x_t)} 
    + \hare{ (\omega - 1) \cdot \Delta \hat v_c (x_t)}
    \label{eq:vpred_cfg}
\end{equation}
where $\Delta \hat v_c(x_t) := \hat v_c(x_t) - \hat v_\varnothing (x_t)$. 
Substituting Eq. \ref{eq:vpred_cfg} into Eq. \ref{eq:vpred_ode} yields the following:
\begin{equation}
    \frac{\dd}{\dd t} x_t 
    = \tort{\hat v_c (x_t)}
    + \hare{ (\omega - 1) \cdot \Delta \hat v_c (x_t)}.
\end{equation}
We split this diffusion ODE into a multirate system of ODEs similar to Section \ref{sec:multirate}.
\begin{align}
    \frac{\dd}{\dd t } \xT{t} = \tort{\hat v_c (\xT{t} + \xH{t})},
    \quad \frac{\dd}{\dd t } \xH{t} = \hare{
        (\omega - 1) \cdot \Delta \hat v_c (\xT{t} + \xH{t}) 
    }.
\end{align}
Both equations retain the form of Eq. \ref{eq:vpred_ode} so that existing solvers as the Euler method can be applied to each equation without modification.
Furthermore, Algorithm \ref{alg:tortoise_hare_bh} could be utilized unchanged since it is agnostic to the form of equation or the type of the diffusion model solver.

\section{Proof for approximation error bound analysis}

We provide a proof for error accumulation presented in Section \ref{sec:error_bound}. 
More rigourous analysis of error bounds could be found in Section II. 3. of \cite{hairer1993solving}.
\begin{theorem}
    Assume the local integration error of an ODE using a solver of order $p$ and timestep size $\Delta t$ is given by:
    \begin{equation}
        \hat{x}_{t-\Delta t} - x_{t-\Delta t} = c \cdot (\Delta t)^{p+1} + \mathcal{O}((\Delta t)^{p+2})
        \label{eq:assumtion_step}
    \end{equation}
    for sufficiently small $\Delta t$. Then the error of using the same solver repeatedly for $m$ steps is given by 
    \begin{equation}
        \hat{x}_{t-m\Delta t} - x_{t-m\Delta t} = c \cdot m(\Delta t)^{p+1} + \mathcal{O}((\Delta t)^{p+2}).
        \label{eq:repeated_step}
    \end{equation}
\end{theorem}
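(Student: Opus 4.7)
The plan is to prove the statement by induction on $m$. The base case $m=1$ is precisely the single-step assumption (Eq.~\ref{eq:assumtion_step}). For the inductive step, I assume that after $m-1$ steps the error satisfies $\hat{x}_{t-(m-1)\Delta t} - x_{t-(m-1)\Delta t} = c(m-1)(\Delta t)^{p+1} + \mathcal{O}((\Delta t)^{p+2})$, and then propagate this bound forward by one additional step.

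The key device is a telescoping decomposition. Let $y_{t-m\Delta t}$ denote the state produced by applying one solver step of size $\Delta t$ starting from the \emph{exact} value $x_{t-(m-1)\Delta t}$ (rather than from the numerical value $\hat{x}_{t-(m-1)\Delta t}$). Then
\begin{equation}
\hat{x}_{t-m\Delta t} - x_{t-m\Delta t}
= \bigl(\hat{x}_{t-m\Delta t} - y_{t-m\Delta t}\bigr) + \bigl(y_{t-m\Delta t} - x_{t-m\Delta t}\bigr).
\end{equation}
The second bracket equals $c(\Delta t)^{p+1} + \mathcal{O}((\Delta t)^{p+2})$ directly by the single-step assumption applied at $x_{t-(m-1)\Delta t}$. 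For the first bracket, I would invoke smoothness of the solver map $S_{\Delta t}$: a first-order Taylor expansion around $x_{t-(m-1)\Delta t}$ gives $S_{\Delta t}(\hat{x}) - S_{\Delta t}(x) = (I + \Delta t \, DF)(\hat{x} - x) + \mathcal{O}(\|\hat{x}-x\|^2)$ for the Jacobian $DF$ of the solver's increment. Substituting the inductive hypothesis $\hat{x} - x = \mathcal{O}((\Delta t)^{p+1})$, the linear correction $\Delta t\, DF\,(\hat{x}-x)$ is $\mathcal{O}((\Delta t)^{p+2})$ and the quadratic remainder is $\mathcal{O}((\Delta t)^{2p+2})$, both absorbed into the error term. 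Hence the first bracket equals $c(m-1)(\Delta t)^{p+1} + \mathcal{O}((\Delta t)^{p+2})$, and summing the two contributions yields $cm(\Delta t)^{p+1} + \mathcal{O}((\Delta t)^{p+2})$, closing the induction.

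The main obstacle is justifying the stability expansion used in the first bracket: it implicitly requires Lipschitz regularity of the solver's right-hand side in $x$ (i.e.\ of $\hat\epsilon_\theta$ together with the schedule functions $f$, $g$, $\sigma_t$) on the trajectory of interest, since without such a bound propagated errors could amplify by more than a factor $1+\mathcal{O}(\Delta t)$ per step. A secondary subtlety is that the leading constant $c$ in Eq.~\ref{eq:assumtion_step} generically depends on the state at which the step originates, so the accumulated error is strictly $\sum_{k=0}^{m-1} c_k (\Delta t)^{p+1}$; for fixed $m$ independent of $\Delta t$ the variation $c_k - c_0 = \mathcal{O}(k\Delta t)$ can be absorbed into the $\mathcal{O}((\Delta t)^{p+2})$ term. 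This matches the convention in the Hairer--Wanner framework cited and the regime (small integer $m$) used in Section~\ref{sec:error_bound}; for the $m \to \infty$ regime with $m\Delta t$ fixed, the same strategy would need a Gr\"onwall amplification factor $e^{Lm\Delta t}$ in front of the induction bound, which preserves the leading order but complicates the constant.
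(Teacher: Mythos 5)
Your argument is correct and follows the same inductive, two-term telescoping strategy as the paper; the only structural difference is the choice of intermediate state. You insert $y_{t-m\Delta t}$, the result of one \emph{numerical} solver step launched from the \emph{exact} state $x_{t-(m-1)\Delta t}$, so that $\hat x - y$ carries the propagated error (through the solver map $S_{\Delta t}$) and $y - x$ is the fresh local error. The paper instead inserts $\tilde x_{t-(m+1)\Delta t}$, the \emph{exact} ODE flow launched from the \emph{numerical} state $\hat x_{t-m\Delta t}$, so that $\tilde x - x$ carries the propagated error (through the exact flow) and $\hat x - \tilde x$ is the local error. Both propagators act as $I + \mathcal{O}(\Delta t)$, so the bookkeeping is identical; the paper's choice has the cosmetic advantage that its local-error term is literally Eq.~\eqref{eq:assumtion_step} applied at the numerical state, whereas yours applies it at the exact state, which is arguably more natural. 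Your supplementary remarks improve on the paper's presentation: the Lipschitz regularity you require of the solver map is exactly the assumption hiding inside the paper's unexplained $I + \mathcal{O}(\Delta t)$ factor, and your observation that the leading constant $c$ drifts by $\mathcal{O}(\Delta t)$ per step and is therefore absorbable into the $\mathcal{O}((\Delta t)^{p+2})$ remainder justifies the paper's implicit use of a single $c$ across all $m$ steps. The Gr\"onwall caveat you mention concerns the global-error regime $m\Delta t = \mathcal{O}(1)$, which is outside the intended scope here (small fixed $m$) but is a fair warning about how far the linear-in-$m$ accumulation can be trusted.
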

\begin{proof}
    We use mathematical induction.
    (\textbf{Base step}) For $m=1$, Eq. \ref{eq:repeated_step} reduces to the assumption.
    (\textbf{Inductive step}) Assume the error of using the solver $m$ times is given by Eq. \ref{eq:repeated_step}. 
    We proceed to the next iteration to obtain $\hat x_{t-(m+1)\Delta t}$. 
    Let $\tilde x_{t - (m+1)\Delta t}$ be the \textit{exact} solution given by solving the ODE from $t-m\Delta t $ to $t-(m+1)\Delta t$ using $\hat x_{t-m\Delta t}$.
    The error in Eq. \ref{eq:repeated_step} is transported to the next timestep as 
    \begin{align}
        \tilde x_{t-(m+1)\Delta t} - x_{t-(m+1)\Delta t } &= \left(
            I + \mathcal{O}(\Delta t)
        \right)\left(
            \hat x_{t - m\Delta t} - x_{t-m\Delta t}
        \right)
        \\
        &= c \cdot m(\Delta t)^{p+1} + \mathcal{O}((\Delta t)^{p+2}).
    \end{align}
    On the other hand, the local error of the next iteration is also given by Eq. \ref{eq:assumtion_step}:
    \begin{equation}
        \hat x_{t-(m+1)\Delta t} - \tilde x_{t-(m+1)\Delta t } = 
        c \cdot (\Delta t)^{p+1} + \mathcal{O}((\Delta t)^{p+2}).
    \end{equation}
    The error of using the solver $m+1$ times is thus
    \begin{equation}
        \hat x_{t-(m+1)\Delta t} - x_{t-(m+1)\Delta t } = 
        c \cdot (m+1) (\Delta t)^{p+1} + \mathcal{O}((\Delta t)^{p+2}).
    \end{equation}
    Therefore the error of using the ODE solver $m$ times is given by Eq. \ref{eq:repeated_step} for all positive integer $m$.
\end{proof}

\section{More details for Richardson Extrapolation}
\label{sec:appendix_richardson}

\begin{algorithm}
    \caption{Richardson Extrapolation}
    \label{alg:richardson}
    \begin{algorithmic}[1]
        \Require $x_T \sim \mathcal{N}(0, \sigma_T^2 I)$ \Comment{Initial noise}
        \Require $\omega \geq 0$ \Comment{Guidance scale}
        \Require $\{t_i\}_{0 \leq i \leq N}, t_0 = T, t_N=0$
        \Comment{Fine-grained timestep grid}
        \State $\xT{T} \leftarrow x_T$
        \State $\xH{T} \leftarrow 0$
        \For{$i=0 \textbf{\ to } N-1$}
            \State $\ec \leftarrow \hat\epsilon_\theta (\xT{t_i} + \xH{t_i}, c)$ 
            \State $\eo \leftarrow \hat\epsilon_\theta (\xT{t_i} + \xH{t_i}, \varnothing)$
            \State $\dec \leftarrow \ec - \eo$
            \State $\hxTi{t_{i+1}}1 \leftarrow \text{Solver}(\xT{t_i}, \ec, t_i, t_{i+1})$
            \Comment{$\hat x_{t_{i+1}}^{(1)}$ of the \tort{tortoise}}
            \State $\hxHi{t_{i+1}}1 \leftarrow \text{Solver}(\xH{t_i}, (\omega - 1) \cdot \dec, t_i, t_{i+1})$
            \Comment{$\hat x_{t_{i+1}}^{(1)}$ of the \hare{hare}}
            
            \State $t_m = (t_i + t_{i+1}) / 2$ 
            \Comment{Midpoint of current and next timesteps}
            
            \State $\hxTi{t_m}2 \leftarrow \text{Solver}(\xT{t_i}, \ec, t_i, t_m) $
            \State $\hxHi{t_m}2 \leftarrow \text{Solver}(\xH{t_i}, (\omega - 1) \cdot \dec, t_i, t_m) $
            
            \State $\ec \leftarrow \hat\epsilon_\theta \left(\hxTi{t_m}2 + \hxHi{t_m}2, c\right)$ 
            \State $\eo \leftarrow \hat\epsilon_\theta \left(\hxTi{t_m}2 + \hxHi{t_m}2, \varnothing \right)$
            \State $\dec \leftarrow \ec - \eo$

            \State $\hxTi{t_{i+1}}2 \leftarrow \text{Solver}(\hxTi{t_m}2, \ec, t_m, t_{i+1}) $
            \Comment{$\hat x_{t_{i+1}}^{(2)}$ of the \tort{tortoise}}
            \State $\hxHi{t_{i+1}}2 \leftarrow \text{Solver}(\hxHi{t_m}2, (\omega - 1) \cdot \dec, t_m, t_{i+1}) $
            \Comment{$\hat x_{t_{i+1}}^{(2)}$ of the \hare{hare}}

            \State $\xT{t_{i+1}} \leftarrow \hxTi{t_{i+1}}{1}$
            \Comment{\tort{Tortoise} of next step}
            \State $\xH{t_{i+1}} \leftarrow \hxHi{t_{i+1}}{1}$
            \Comment{\hare{Hare} of next step}
        \EndFor
        \State \Return $\tort{\Vert \hxTi{t_{i+1}}1 - \hxTi{t_{i+1}}2 \Vert}, \hare{\Vert \hxHi{t_{i+1}}1 - \hxHi{t_{i+1}}2 \Vert }$
    \end{algorithmic}
\end{algorithm}

We specify further details about the computation of the coarse timestep grid $C$.
We calculate $\tort{\Vert \hxTi{s}1 - \hxTi{s}2 \Vert}$ and $ \hare{\Vert \hxHi{s}1 - \hxHi{s}2 \Vert}$ by solving both the tortoise and hare equations on the fine-grained timestep grid using Algorithm \ref{alg:richardson}.
In particular, for each denoising step $t_i$, we first find $\hat x_{t_{i+1}}^{(1)}$ by using the diffusion model solver once from $t_i$ to $t_{i+1}$.
Then we find $\hat x_{t_{i+1}}^{(2)}$ by using the diffusion model solver twice, from $t_i$ to $(t_i + t_{i+1})/2$ and from $(t_i + t_{i+1})/2$ to $t_{i+1}$.
We use $\hat x_{t_{i+1}}^{(1)}$ for the next denoising step to ensure that we follow the reference trajectory of CFG \cite{ho2022cfg}.
Together with Algorithm \ref{alg:lbyl_bh}, we obtain the coarse timestep grid $C$ specified in Table \ref{tab:obtained_c}.

\begin{table}
    \caption{
        Obtained coarse timestep grid for different $\rho$ values. Our choice is marked in \colorbox{blue!10}{blue}. For brevity, only indices of the timesteps are shown. Note that only $i<i_\text{hi}$ is actually used in the final algorithm.
    }
    \centering
    \begin{tabularx}{\textwidth}{l*{1}{>{\centering\arraybackslash}X}}
        \toprule
        $\rho$ & $\{ i \vert t_i \in C\}$ \\
        \midrule
        %
        \multicolumn{2}{l}{\textit{Stable Diffusion 1.5 with DDIM}} \\
        0.9 & \{0, 1, 2, 3, 4, 5, 6, 7, 8, 10, 12, 14, 16, 18, 20, 22, 24, 26, 28, 30, 32, 34, 35, 36, 37, 38, 39, 40, 41, 42, 43, 44, 45, 46, 47, 48, 49\} \\
        
        1.0 & \{0, 1, 2, 3, 4, 5, 6, 7, 9, 11, 13, 15, 17, 19, 21, 23, 25, 27, 29, 31, 33, 35, 37, 38, 39, 40, 41, 42, 43, 44, 45, 46, 47, 48, 49\} \\

        \rowcolor{blue!10} 1.1 & \{0, 1, 2, 3, 4, 5, 6, 8, 10, 12, 14, 17, 20, 23, 26, 28, 30, 32, 34, 36, 38, 39, 40, 41, 42, 43, 44, 45, 46, 47, 48, 49\} \\

        1.2 & \{0, 1, 2, 3, 4, 5, 7, 9, 11, 14, 17, 20, 23, 26, 29, 31, 33, 35, 37, 39, 41, 42, 43, 44, 45, 46, 47, 48, 49\} \\

        1.3 & \{0, 1, 2, 3, 4, 6, 8, 10, 13, 16, 19, 22, 25, 28, 31, 34, 36, 38, 40, 42, 44, 45, 46, 47, 48, 49\} \\

        \midrule
        \multicolumn{2}{l}{\textit{Stable Diffusion 3.5 Large with Euler method}} \\

        0.9 & \{0, 1, 2, 3, 5, 7, 10, 13, 16, 18, 20, 22, 23, 24, 25, 26\} \\
        \rowcolor{blue!10} 1.0 & \{0, 1, 2, 4, 6, 9, 12, 15, 18, 20, 22, 23, 24, 25, 26\} \\
        1.1 & \{0, 1, 2, 4, 6, 9, 13, 17, 20, 22, 23, 24, 25, 27\} \\
        1.2 & \{0, 1, 2, 4, 7, 11, 15, 19, 21, 23, 25, 27\} \\
        1.3 & \{0, 1, 3, 6, 10, 15, 19, 22, 24, 26\} \\

        \bottomrule
    \end{tabularx}
    \label{tab:obtained_c}
\end{table}

\section{Sensitivity of \texorpdfstring{$C$}{C}}
\label{sec:appendix_sensitivity}

\paragraph{Guidance weight \texorpdfstring{$\omega$}{ }}

\begin{table}[t]
    \caption{Obtained coarse timestep grid for different $\omega$ values. Our choice is marked in \colorbox{blue!10}{blue}. For brevity, only indices of the timesteps are shown. Note that only $i<i_\text{hi}$ is actually used in the final algorithm. The results show that while a bigger $\omega$ results in a denser $C$, the overall trend is consistent. }
    \centering
    \begin{tabularx}{\textwidth}{c*{1}{>{\centering\arraybackslash}X}}
        \toprule
        Variant & $\{i|t_i \in C\}$ \\
        \midrule
        \multicolumn{2}{l}{\textit{Stable Diffusion 1.5 with DDIM}} \\
        $\omega = 6.5, \rho = 0.93$ &
        \{0, 1, 2, 3, 4, 5, 6, 8, 10, 12, 14, 17, 20, 23, 26, 28, 30, 32, 34, 36, 38, 39, 40, 41, 42, 43, 44, 45, 46, 47, 48, 49, 50\} \\
        $\omega = 6.5, \rho = 1.1$ &
        \{0, 1, 2, 3, 4, 6, 8, 10, 13, 16, 19, 22, 25, 28, 31, 33, 35, 37, 39, 41, 43, 44, 45, 46, 47, 48, 49, 50\} \\
        \rowcolor{blue!10} $\omega = 7.5, \rho = 1.1$ &
        \{0, 1, 2, 3, 4, 5, 6, 8, 10, 12, 14, 17, 20, 23, 26, 28, 30, 32, 34, 36, 38, 39, 40, 41, 42, 43, 44, 45, 46, 47, 48, 49, 50\} \\
        $\omega = 8.5, \rho = 1.1$ &
        \{0, 1, 2, 3, 4, 5, 6, 7, 8, 10, 12, 14, 16, 18, 20, 22, 24, 26, 28, 30, 32, 34, 36, 37, 38, 39, 40, 41, 42, 43, 44, 45, 46, 47, 48, 49, 50\} \\
        $\omega = 8.5, \rho = 1.22$ &
        \{0, 1, 2, 3, 4, 5, 6, 8, 10, 12, 14, 17, 20, 23, 26, 28, 30, 32, 34, 36, 38, 39, 40, 41, 42, 43, 44, 45, 46, 47, 48, 49, 50\} \\
        \bottomrule
    \end{tabularx}
    \label{tab:omega_dependence}
\end{table}

The coarse timestep grid $C$ depends on guidance weight $\omega$, since the error bounds computed by Algorithm \ref{alg:richardson} depend on $\omega$. As $\omega$ increases, both the \hare{hare} term and its approximation error grow, resulting in a denser coarse grid $C$. Table \ref{tab:omega_dependence} shows $C$ evaluated under different $\omega$ values. While a bigger guidance scale results in a denser $C$, the overall trend is consistent; one can obtain the same $C$ by adjusting $\rho$.

\paragraph{Batch size}

\begin{table}[t]
    \caption{Obtained coarse timestep grid with fewer sample trajectories. Our original choice is marked in \colorbox{blue!10}{blue}. For brevity, only indices of the timesteps are shown. Note that only $i<i_\text{hi}$ is actually used in the final algorithm. The results show that $C$ can be computed accurately with only 1,000 sample trajectories.}
    \centering
    \begin{tabularx}{\textwidth}{
        c  
        c    
        >{\centering\arraybackslash}X        
    }
        \toprule
        Batch size & IoU & $\{i|t_i \in C\}$ \\
        \midrule
        \multicolumn{2}{l}{\textit{Stable Diffusion 1.5 with DDIM}} \\
        \rowcolor{blue!10}
        30k & -- &
        \{0, 1, 2, 3, 4, 5, 6, 8, 10, 12, 14, 17, 20, 23, 26, 28, 30, 32, 34, 36, 38, 39, 40, 41, 42, 43, 44, 45, 46, 47, 48, 49, 50\} \\
        1k (trial 1) & 100\% &
        \{0, 1, 2, 3, 4, 5, 6, 8, 10, 12, 14, 17, 20, 23, 26, 28, 30, 32, 34, 36, 38, 39, 40, 41, 42, 43, 44, 45, 46, 47, 48, 49, 50\} \\
        1k (trial 2) & 100\% &
        \{0, 1, 2, 3, 4, 5, 6, 8, 10, 12, 14, 17, 20, 23, 26, 28, 30, 32, 34, 36, 38, 39, 40, 41, 42, 43, 44, 45, 46, 47, 48, 49, 50\} \\
        1k (trial 3) & 100\% &
        \{0, 1, 2, 3, 4, 5, 6, 8, 10, 12, 14, 17, 20, 23, 26, 28, 30, 32, 34, 36, 38, 39, 40, 41, 42, 43, 44, 45, 46, 47, 48, 49, 50\} \\
        \rowcolor{gray!20} \multicolumn{3}{c}{\textellipsis} \\
        1k (trial 8) & 57.5\% &
        \{0, 1, 2, 3, 4, 5, 7, 9, 11, 13, 16, 19, 22, 25, 28, 30, 32, 34, 36, 38, 39, 40, 41, 42, 43, 44, 45, 46, 47, 48, 49, 50\} \\
        \rowcolor{gray!20} \multicolumn{3}{c}{\textellipsis} \\
        1k (trial 30) & 100\% &
        \{0, 1, 2, 3, 4, 5, 6, 8, 10, 12, 14, 17, 20, 23, 26, 28, 30, 32, 34, 36, 38, 39, 40, 41, 42, 43, 44, 45, 46, 47, 48, 49, 50\} \\
        \bottomrule
    \end{tabularx}
    \label{tab:sampling_trials}
\end{table}

While we computed $C$ with sample trajectories on 30,000 prompts from the COCO 2014 dataset, it is possible to compute $C$ using fewer sample trajectories. Table \ref{tab:sampling_trials} shows $C$ computed on a batch of 1,000 prompts, compared to $C$ computed on 30,000 prompts. The results closely matched our original, large-scale estimate. Compared to the original estimate, the 1,000 sample estimate demonstrated 95.25\% IoU (i.e., Jaccard index) in average over 30 trials.

\section{More comparisons with CFG variants}
\label{sec:appendix_methods}

\begin{table}[t]
    \caption{        
        Comparison of methods in terms of distributional similarity and prompt fidelity.
        Our method is marked in \colorbox{blue!10}{blue}, whereas vanilla CFG is marked in \colorbox{gray!20}{gray}.
        The parameters of THG correspond to $(\rho, b, i_\text{hi})$.
        The best results are \textbf{highlighted}.
    }
    \centering
    \begin{tabularx}{\textwidth}{
        l
        cc
        *{4}{>{\centering\arraybackslash}X}
    }
        \toprule
        \multirow{2}{*}{Method} & \multirow{2}{*}{$N$} & \multirow{2}{*}{NFE $\downarrow$}
            & \multicolumn{2}{c}{Distributional similarity} & \multicolumn{2}{c}{Prompt fidelity} \\
        & & & FID $\downarrow$ & CMMD $\downarrow$ & CS $\uparrow$ & IR $\uparrow$ \\
        \midrule
        \multicolumn{6}{l}{\textit{Stable Diffusion 1.5 with DDIM}} \\
        \rowcolor{gray!20}CFG~\cite{ho2022cfg} & 50 & 100 & 14.133 & 0.58948 & {26.295} & {0.14764} \\
        Selective Guidance~\cite{golnari2023selective} & 50 & 70 & \textbf{12.895} & \textbf{0.56052} & 25.602 & $-0.06691$ \\
        Guidance Interval~\cite{kynkaanniemi2024limited} & 50 & 70 & 14.555 & 0.62227 & 26.153 & 0.09658 \\
        CFG-Cache~\cite{lv2024fastercache} & 50 & 70 & 14.422 & 0.59759 & 26.179 & 0.09705 \\
        \rowcolor{blue!10} THG (1.1, 1.1, 38) & 50 & 70 & 14.232 & 0.59354 & \textbf{26.197} & \textbf{0.11576} \\
        \midrule
        \rowcolor{gray!20}CFG~\cite{ho2022cfg} & 35 & 70 & 13.342 & 0.57232 & {26.321} & {0.12694} \\
        Selective Guidance~\cite{golnari2023selective} & 35 & 49 & \textbf{12.299} & \textbf{0.54550} & 25.623 & $-0.09007$ \\
        Guidance Interval~\cite{kynkaanniemi2024limited} & 35 & 49 & 14.490 & 0.61178 & 26.140 & 0.05919 \\
        CFG-Cache~\cite{lv2024fastercache} & 35 & 49 & 13.773 & 0.58162 & 26.120 & 0.05246 \\
        \rowcolor{blue!10} THG (1.7, 1.1, 30) & 35 & 49 & 13.468 & 0.57637 & \textbf{26.265} & \textbf{0.09202} \\
        \midrule
        \rowcolor{gray!20}CFG~\cite{ho2022cfg} & 20 & 40 & 13.366 & 0.56159 & {26.370} & {0.10090} \\
        Selective Guidance~\cite{golnari2023selective} & 20 & 30 & \textbf{12.839} & \textbf{0.54740} & 25.900 & $-0.04452$ \\
        Guidance Interval~\cite{kynkaanniemi2024limited} & 20 & 30 & 14.937 & 0.61333 & 26.244 & 0.02827 \\
        CFG-Cache~\cite{lv2024fastercache} & 20 & 30 & 13.675 & 0.56815 & 26.211 & 0.04604 \\
        \rowcolor{blue!10} THG (3.5, 1.1, 17) & 20 & 30 & 13.345 & 0.56719 & \textbf{26.278} & \textbf{0.07212} \\
        \bottomrule
    \end{tabularx}
    \label{tab:L1}
\end{table}

Table \ref{tab:L1} extends our comparison to include additonal baselines \cite{golnari2023selective, kynkaanniemi2024limited} across a wider range of NFEs. 
%
%
Note that we used a different number of fine-grained timesteps $N$ to control the NFE of CFG. 
The results demonstrate that our approach achieves superior image quality with the same NFE budget. 
While Selective Guidance \cite{golnari2023selective} shows lower FID and CMMD values, they generate images with low prompt fidelity and degraded details indicated by lower CS and IR values as noted in Appendix \ref{sec:appendix_tradeoff}.
Aside from this phenomenon, THG achieves superior overall performance.

\paragraph{Comparison with vanilla CFG}
Although vanilla CFG with 70 NFE yields a FID of 13.342 (better than THG's 14.232 with 70 NFE), its PSNR drops substantially to 19.42 dB compared to 24.16 dB for THG. 
This aligns with observations in \cite{choi2025enhanced}, where reducing $N$ sometimes lowers FID. 
Moreover, \cite{parmar2021cleanfid, jung2021internalized} show that FID may not reliably reflect perceptual quality when the image structures diverge, so we focus on comparisons at the same $N$.

\section{More details for metrics}
\label{sec:appendix_metrics}

\paragraph{FID, FAD}
Fréchet Inception Distance \cite{heusel2017fid} (FID) is a ubiquitously used metric for developing and adopting image generative models. It measures the distance between real and generated images in a deep feature space to capture relevant features of the two distributions \cite{parmar2021cleanfid}. Therefore, a lower FID value indicates realistic generation. To measure FID, one first uses an InceptionV3 feature extractor model to compute features from real and generated images. Under the assumption that the resulting feature sets follows a multidimensional Gaussian distribution, the distance of the two distributions $\mathcal{N}(\mu_r, \Sigma_r)$ and $\mathcal{N}(\mu_g, \Sigma_g)$ is given by the Fréchet distance
\begin{equation}
    \text{FD} = ||\mu_r - \mu_g ||_2^2 + \text{tr}(\Sigma_r + \Sigma_g - 2(\Sigma _r \Sigma _g)^{1/2}).
\end{equation}
Similarly, Fréchet Audio Distance \cite{kilgour2019frechet} (FAD) measures the distance between real and generated audio by using a VGGish embedding model to extract features from audio clips.

\paragraph{CMMD} CMMD \cite{jayasumana2024cmmd} is a recently proposed alternative for FID. It uses CLIP \cite{radford2021learning} embeddings and the Maximum Mean Discrepancy (MMD) distance instead of InceptionV3 features and the Fréchet distance. CLIP is trained on 400 million images with corresponding text descriptions and therefore much more suitable for capturing rich and diverse content. MMD does not make any assumptions about the underlying distributions, unlike the Fréchet distance which assumes multidimensional Gaussian distributions. By combining CLIP and MMD, CMMD avoids the drawbacks of FID.

\paragraph{CLIP Score, CLAP Score} CLIP Score \cite{hessel2021clipscore} uses CLIP \cite{radford2021learning} to assess image-caption compatibility. CLIP learns a multimodal embedding space by jointly traning an image encoder and a text encoder, while the cosine similarity of matching image and text embeddings are maximized. Leveraging this design, CLIP score is defined by the cosine similarity of the image embedding $E_I$ and text prompt embedding $E_C$ as
\begin{equation}
    \text{CLIPScore}(\mathrm{I}, \mathrm{C}) = \max(100 \cdot \cos(E_I, E_C), 0).
\end{equation}
We report the average CLIP score over the generated image set. 
Similarly, CLAP Score \cite{wu2023large} uses the CLAP model to assess audio-caption compatibility.

\paragraph{ImageReward}
ImageReward \cite{xu2023imagereward} is a general-purpose text-to-image human preference reward model, effectively encoding human preferences by traning on actual human feedback. Experiments shows that ImageReward aligns with human ranking better than zero-shot FID and CLIP Score. Also, ImageReward values have larger quantile value range than that of CLIP Score, demonstrating that it can well distinguish the quality of images from each other. We report the average ImageReward value over the generated image set.

\section{Tradeoff of distributional similarity and prompt fidelity}
\label{sec:appendix_tradeoff}

Tables \ref{tab:main} and \ref{tab:abl_b} demonstrate a tradeoff between distributional similarity metrics (FID, CMMD) and prompt fidelity metrics (CS, IR). 
When the prompt fidelity metrics improve so that each image matches better with the given prompt, the distributional similarity metrics worsen so that the distribution of the images is further from that of real images.


\begin{table}[t]
    \caption{
        Ablation study for the guidance scale $\omega$ with CFG \cite{ho2022cfg}.
    }
    \centering
    \begin{tabularx}{\textwidth}{l*{5}{>{\centering\arraybackslash}X}}
        \toprule
        Method & NFE $\downarrow$ & FID $\downarrow$ & CMMD $\downarrow$ & CS $\uparrow$ & IR $\uparrow$ \\
        \midrule \midrule
        %
        \multicolumn{6}{l}{\textit{Stable Diffusion 1.5 with DDIM}} \\
        $\omega = 2.5$ & 100 & 8.438 & 0.56672 & 25.153 & -0.28577 \\
        $\omega = 3.5$ & 100 & 9.143 & 0.54192 & 25.687 & -0.09190 \\
        $\omega = 4.5$ & 100 & 10.644 & 0.54764 & 25.935 & 0.00670 \\
        $\omega = 5.5$ & 100 & 12.030 & 0.56171 & 26.110 & 0.07195 \\
        $\omega = 6.5$ & 100 & 13.222 & 0.57673 & 26.225 & 0.11582 \\ 
        \rowcolor{blue!10} $\omega = 7.5$ & 100 & 14.133 & 0.58948 & 26.295 & 0.14764 \\ 
        $\omega = 8.5$ & 100 & 14.902 & 0.60343 & 26.369 & 0.17431 \\ 
        \bottomrule
    \end{tabularx}
    \label{tab:abl_omega}
\end{table}

\begin{figure}[t]
  \centering
  \begin{subfigure}[b]{0.24\linewidth}
    \centering
    \includegraphics[width=\linewidth]{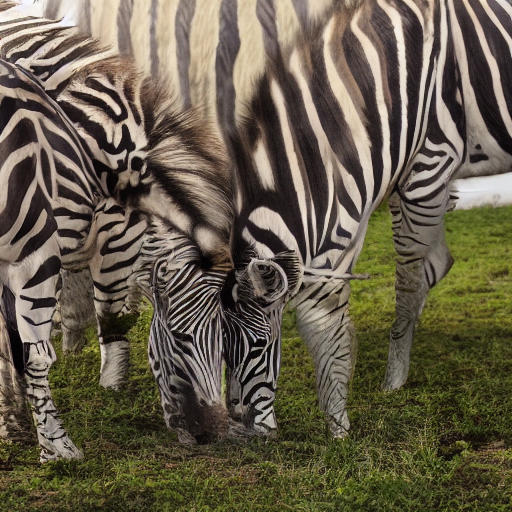}
  \end{subfigure}
    \hfill
  \begin{subfigure}[b]{0.24\linewidth}
    \centering
    \includegraphics[width=\linewidth]{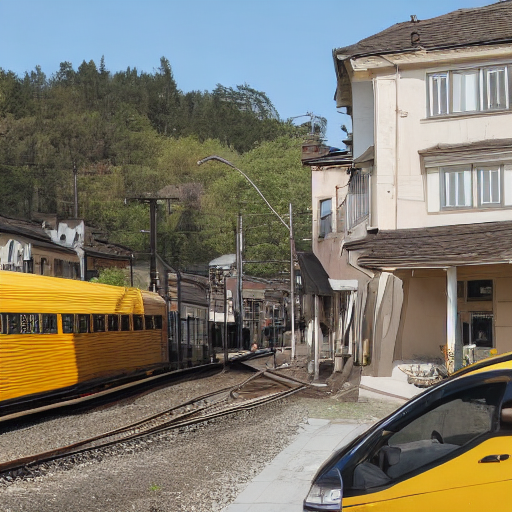}
  \end{subfigure}
\hfill
  \begin{subfigure}[b]{0.24\linewidth}
    \centering
    \includegraphics[width=\linewidth]{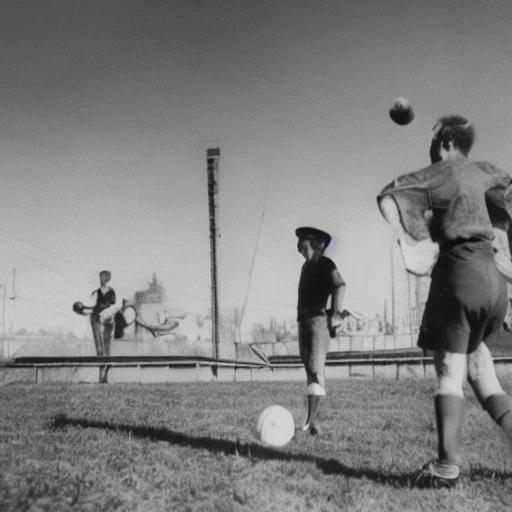}
  \end{subfigure}
\hfill
  \begin{subfigure}[b]{0.24\linewidth}
    \centering
    \includegraphics[width=\linewidth]{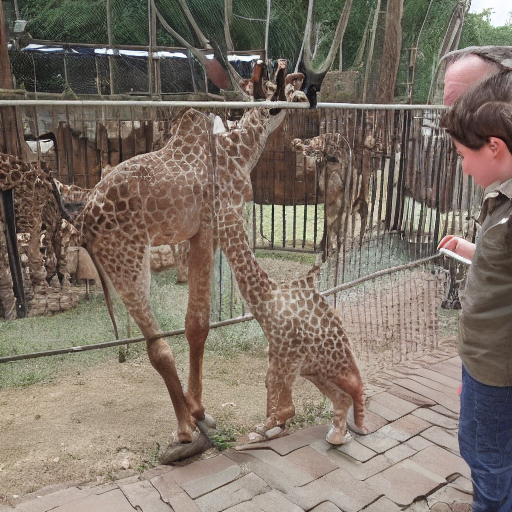}
  \end{subfigure}
  
  \caption{
    \textbf{Generated images using $\omega=2.5$} for the prompts
    ``A group of zebras grazing in the grass.'',
    ``A yellow commuter train traveling past some houses.'',
    ``A couple of men standing on a field playing baseball.'',
    and ``Zoo scene of children at zoo near giraffes, attempting to pet or feed them.'' from the COCO 2014 dataset.
  }
  \label{fig:low_guidance}
\end{figure}

We further investigate this phenomenon by conducting an additional ablation study for the guidance scale $\omega$ using Stable Diffusion 1.5 and CFG.
Table \ref{tab:abl_omega} shows how the metrics change as $\omega$ is changed. 
The minimum FID is achieved at $\omega=2.5$ and the minimum CMMD is achieved at $\omega=3.5$.
However, they suffer from low CS and IR.
Generated images using $\omega=2.5$ are visualized in Fig. \ref{fig:low_guidance}, showing degraded details or insufficient text alignment.
This suggests that lower FID or CMMD does not always indicate better generation quality.
While these distributional similarity metrics measure both image plausibility and diversity, they can possibly fail to report high-quality details of the images with lower values.


Since the global structure of each image is determined by the initial few steps of the reverse diffusion process \cite{kynkaanniemi2024limited, castillo2023adaptive}, the images generated by the methods in Table \ref{tab:main} have mostly shared global structures and differ on delicate details.
Given that, we suggest that the human-perceived quality of generated samples could be better explained by the prompt fidelity metrics compared to the distributional similarity metrics.
Our results in Table \ref{tab:main} with slightly higher FID or CMMD therefore do not indicate a significant degradation of generation quality.

\section{Licenses}

\begin{itemize}
    \item \textbf{Stable Diffusion 1.5}
    – weights released under the CreativeML Open RAIL-M license (v1.0; \url{https://github.com/CompVis/stable-diffusion/blob/main/LICENSE})
    \item \textbf{Stable Diffusion 3.5 Large}
    – weights released under the Stability AI Community Licence v3 (research \& commercial use for organizations or individuals with < USD 1 M annual revenue; \url{https://stability.ai/license})
    \item \textbf{FID} – clean-FID implementation by Parmar et al., released under the MIT License (v1.0; \url{https://github.com/GaParmar/clean-fid/blob/main/LICENSE})
    \item \textbf{CMMD}
    – PyTorch implementation of CLIP Maximum Mean Discrepancy by Sayak Paul, released under the Apache License 2.0 (v2.0; \url{https://github.com/sayakpaul/cmmd-pytorch/blob/main/LICENSE})
    \item \textbf{CLIP Score} – TorchMetrics’ CLIPScore module released under the Apache License 2.0 (v2.0; \url{https://github.com/Lightning-AI/metrics/blob/master/LICENSE}; Lightning-AI)
    \item \textbf{ImageReward} 
    – model and evaluation code released under the Apache License 2.0 (v2.0; \url{https://github.com/THUDM/ImageReward/blob/main/LICENSE}; Xu et al., 2023)
    \item \textbf{MS COCO 2014}: 
    \begin{itemize}
      \item Annotations released under the Creative Commons Attribution 4.0 International license (CC BY 4.0; \url{https://creativecommons.org/licenses/by/4.0/})  
      \item Underlying images governed by Flickr Terms of Use; users must comply with Flickr’s rules when reusing or redistributing any COCO images. 
    \end{itemize}
    %
    %
    %
    \item \textbf{AudioLDM 2}
    – weights released under the Creative Commons Attribution–NonCommercial–ShareAlike 4.0 International License (\url{https://github.com/haoheliu/AudioLDM2/blob/main/LICENSE})
    \item \textbf{AudioCaps}
    – dataset released under the MIT License (v1.0; \url{https://github.com/cdjkim/audiocaps/blob/master/LICENSE}) 
    \item \textbf{FAD}
    – PyTorch implementation of Frechet Audio Distance by Hao Hao Tan, released under the MIT License (v1.0; \url{https://github.com/gudgud96/frechet-audio-distance/blob/main/LICENSE})
    \item \textbf{CLAP Score}
    – model and evaluation code released under the Creative Commons CC0 1.0 Universal License; public domain dedication (\url{https://github.com/LAION-AI/CLAP/blob/main/LICENSE})
\end{itemize}

\section{More qualitative results}
\label{sec:appendix_qual}

Figure \ref{fig:more_qual_15} shows more qualitative results for Stable Diffusion 1.5. Figures \ref{fig:more_qual_35} and \ref{fig:more_qual_35_2} show more qualitative results for Stable Diffusion 3.5 Large. 

\begin{figure}
  \centering

  \begin{subfigure}[b]{\linewidth}
      \centering
      \caption*{Prompt: \textbf{Two horses are frolicking as spectators take pictures.}}
      \begin{subfigure}[b]{0.24\linewidth}
        \centering
        \begin{overpic}[width=\linewidth]{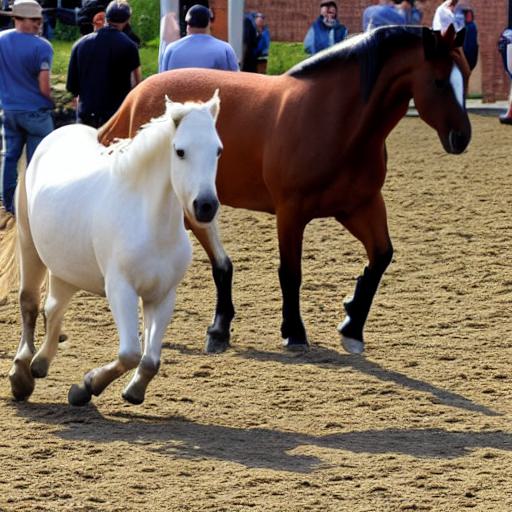}
            \put(30,30){\color{red}\linethickness{2pt}\framebox(20,30){}}
        \end{overpic}
      \end{subfigure}
      \hfill
      \begin{subfigure}[b]{0.24\linewidth}
        \centering
        \begin{overpic}[width=\linewidth]{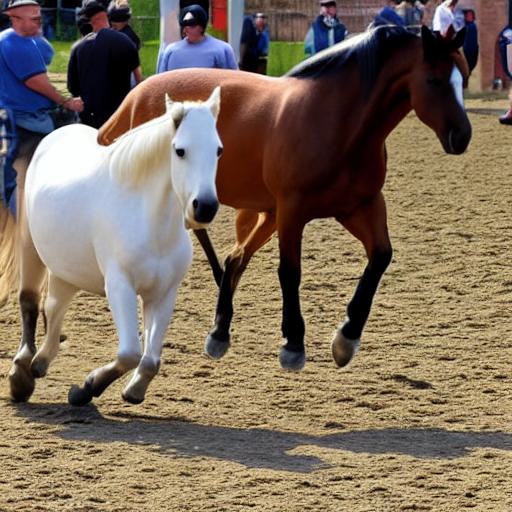}
            \put(30,30){\color{red}\linethickness{2pt}\framebox(20,30){}}
        \end{overpic}
      \end{subfigure}
      \hfill
      \begin{subfigure}[b]{0.24\linewidth}
        \centering
        \begin{overpic}[width=\linewidth]{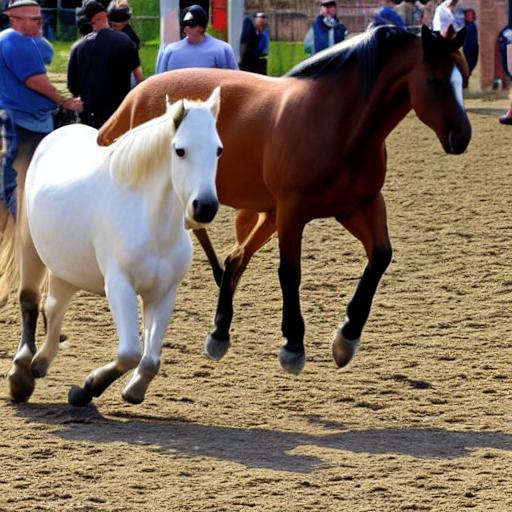}
            \put(30,30){\color{red}\linethickness{2pt}\framebox(20,30){}}
        \end{overpic}
      \end{subfigure}
      \hfill
      \begin{subfigure}[b]{0.24\linewidth}
        \centering
        \begin{overpic}[width=\linewidth]{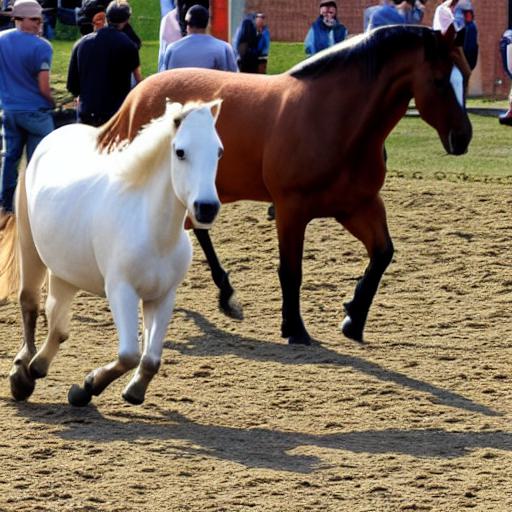}
            \put(30,30){\color{red}\linethickness{2pt}\framebox(20,30){}}
        \end{overpic}
      \end{subfigure}
  \end{subfigure}
  
  \begin{subfigure}[b]{\linewidth}
      \centering
      \caption*{Prompt: \textbf{a male with a purple jacket on skies  posing for a picture}}
      \begin{subfigure}[b]{0.24\linewidth}
        \centering
        \captionsetup{justification=centering,margin=.5cm}
        \includegraphics[width=\linewidth]{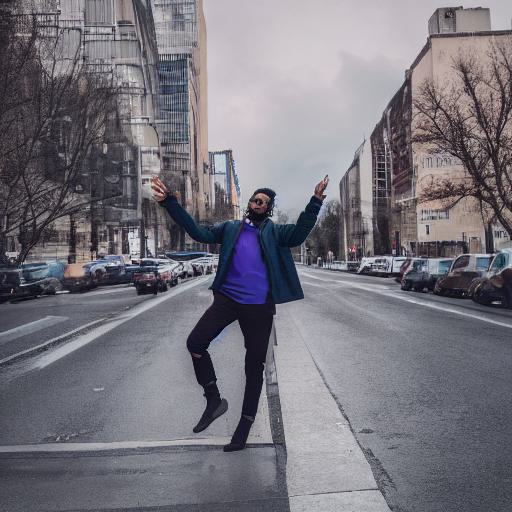}
        \caption{CFG (baseline) NFE = 100}
      \end{subfigure}
      \hfill
      \begin{subfigure}[b]{0.24\linewidth}
        \centering
        \captionsetup{justification=centering}
        \includegraphics[width=\linewidth]{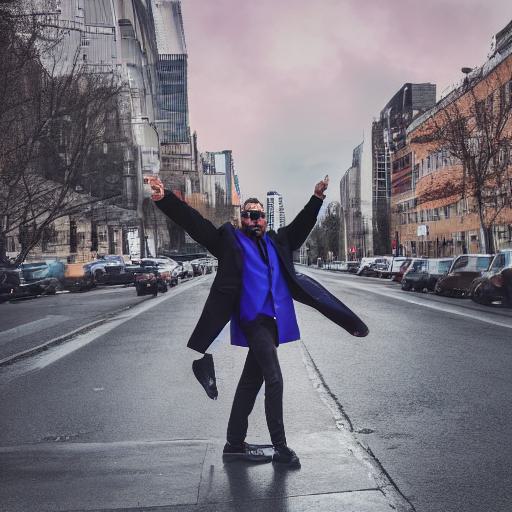}
        \caption{CFG-Cache w/o FFT NFE = 70}
      \end{subfigure}
      \hfill
      \begin{subfigure}[b]{0.24\linewidth}
        \centering
        \captionsetup{justification=centering,margin=.7cm}
        \includegraphics[width=\linewidth]{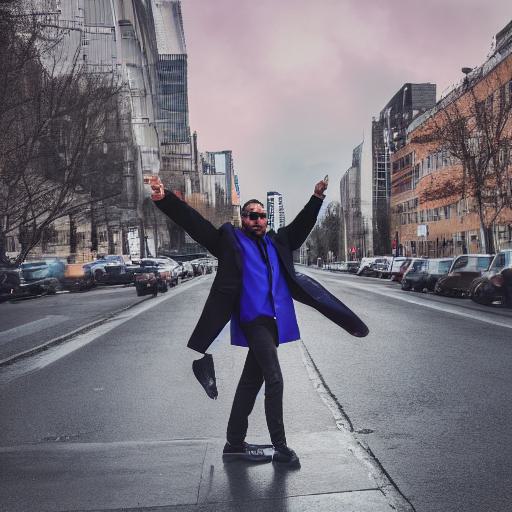}
        \caption{CFG-Cache NFE = 70}
      \end{subfigure}
      \hfill
      \begin{subfigure}[b]{0.24\linewidth}
        \centering
        \captionsetup{justification=centering,margin=.9cm}
        \includegraphics[width=\linewidth]{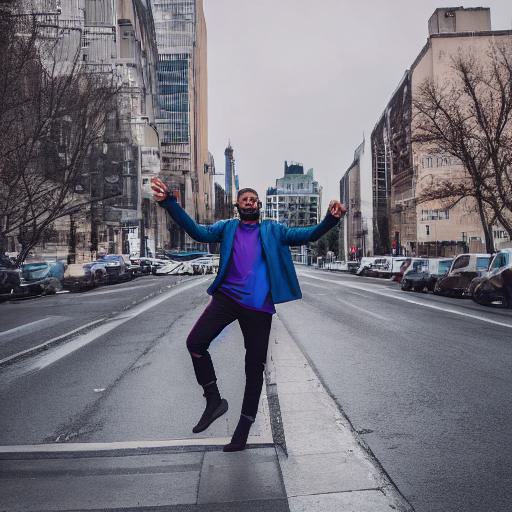}
        \caption{Ours NFE = 70}
      \end{subfigure}
  \end{subfigure}
  
  \caption{
    \textbf{Comparison of visual results} for prompts from the COCO 2014 dataset using Stable Diffusion 1.5.
  }
  \label{fig:more_qual_15}
\end{figure}

\begin{figure}
  \centering

    \begin{subfigure}[b]{\linewidth}
      \centering
      \caption*{Prompt: \textbf{A woman and a man are playing the nintendo wii video game system}}
      \begin{subfigure}[b]{0.24\linewidth}
        \centering
        \includegraphics[width=\linewidth]{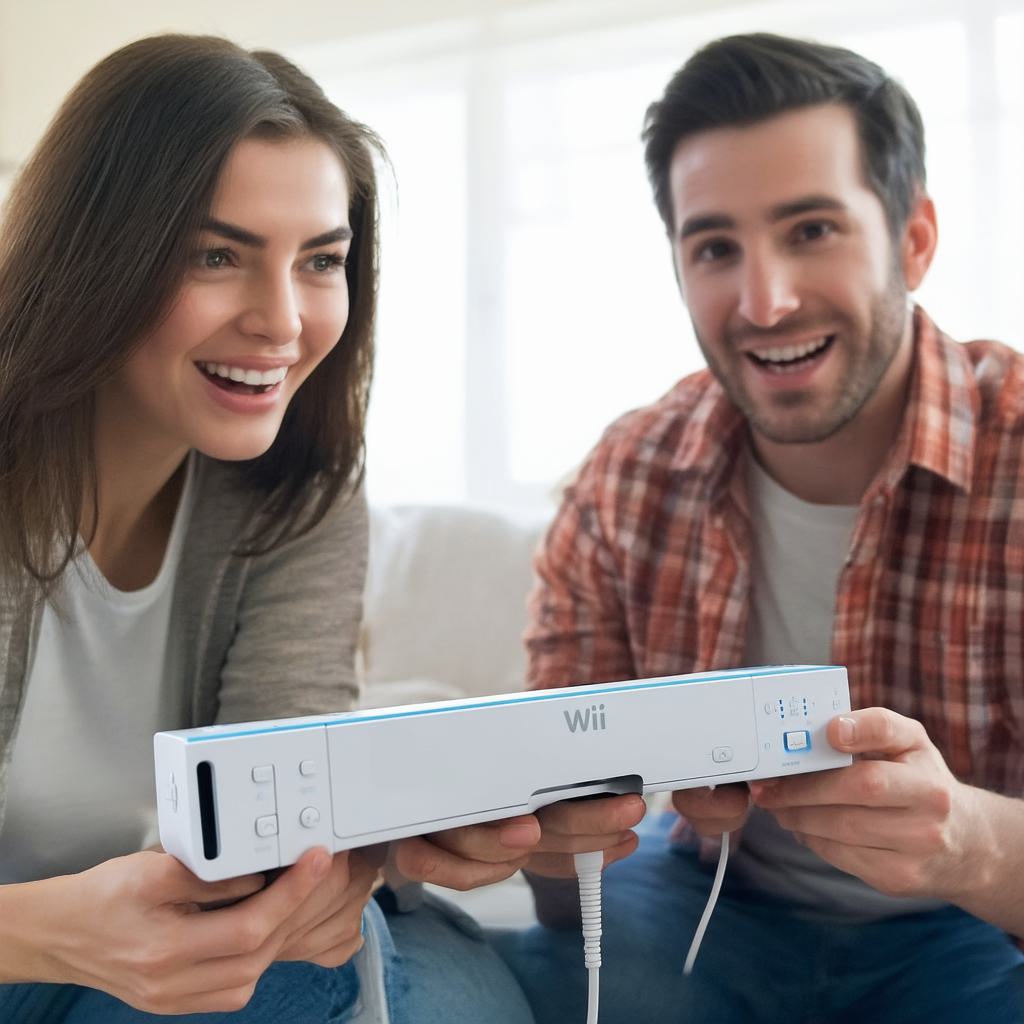}
      \end{subfigure}
      \hfill
      \begin{subfigure}[b]{0.24\linewidth}
        \centering
        \begin{overpic}[width=\linewidth]{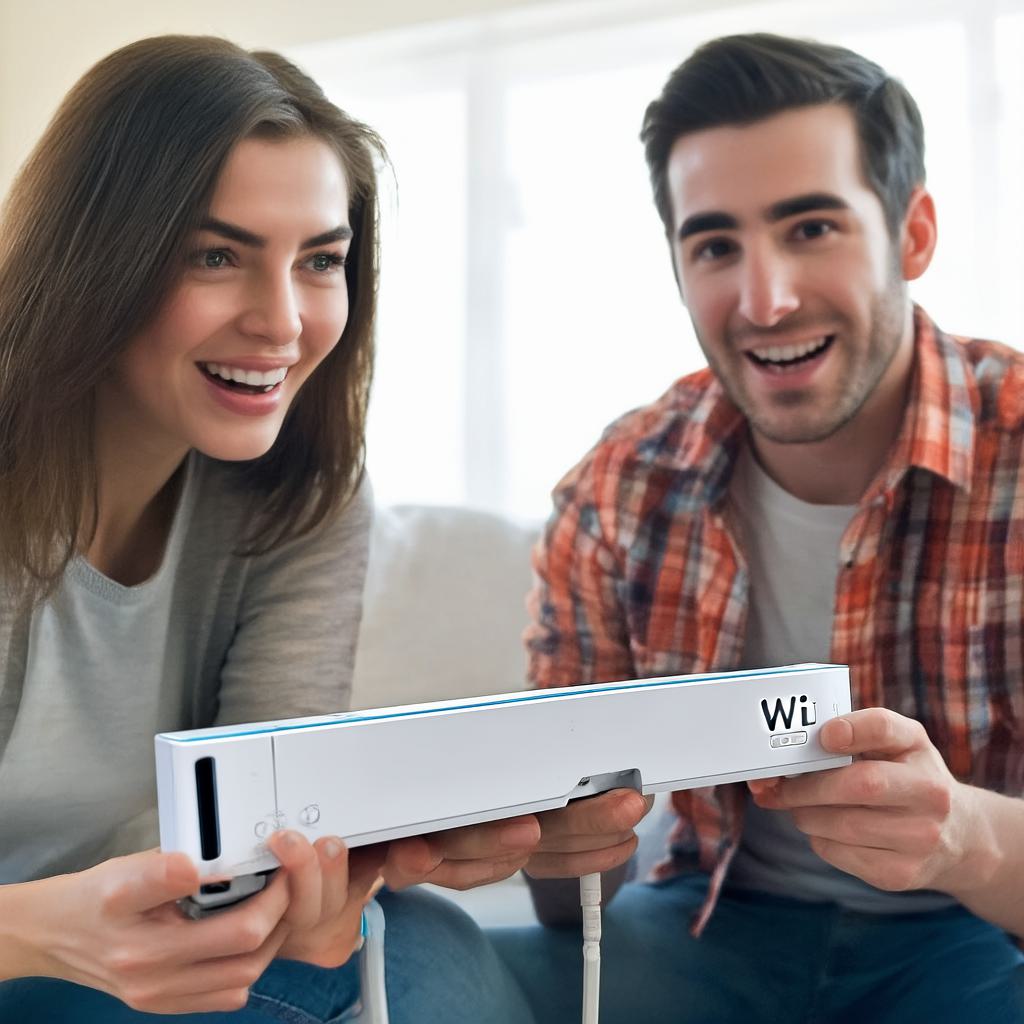}
            \put(70,20){\color{red}\linethickness{2pt}\framebox(15, 15){}}
        \end{overpic}
      \end{subfigure}
      \hfill
      \begin{subfigure}[b]{0.24\linewidth}
        \centering
        \begin{overpic}[width=\linewidth]{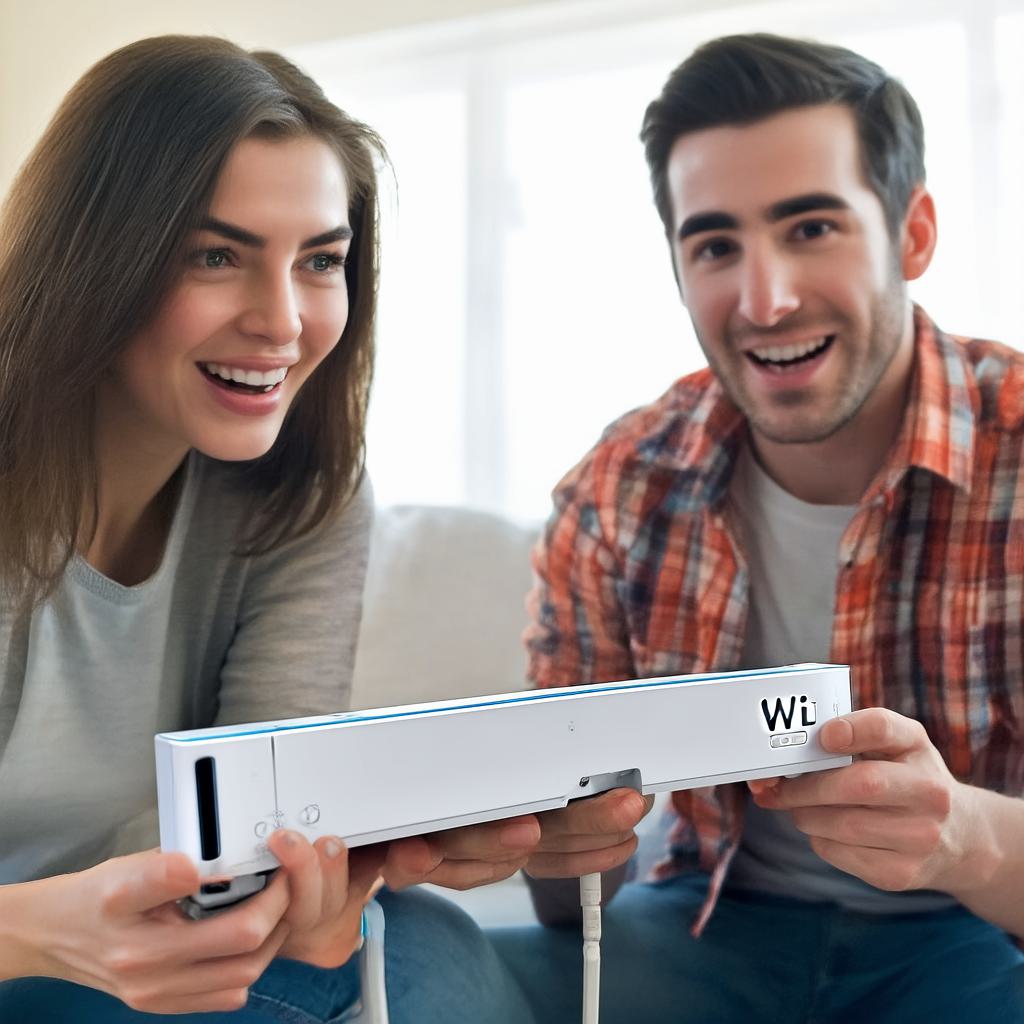}
            \put(70,20){\color{red}\linethickness{2pt}\framebox(15, 15){}}
        \end{overpic}
      \end{subfigure}
      \hfill
      \begin{subfigure}[b]{0.24\linewidth}
        \centering
        \begin{overpic}[width=\linewidth]{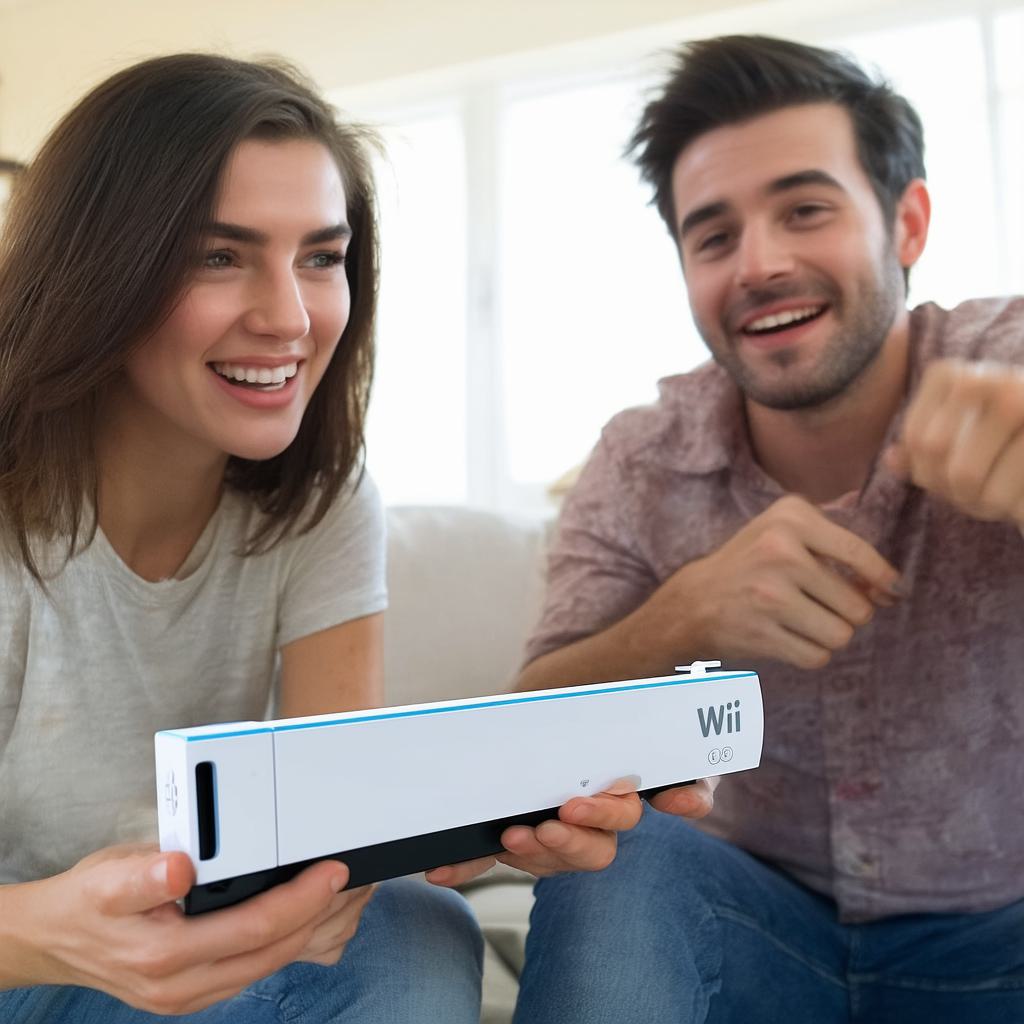}
            \put(65,20){\color{red}\linethickness{2pt}\framebox(15, 15){}}
        \end{overpic}
      \end{subfigure}
  \end{subfigure}
  
  \begin{subfigure}[b]{\linewidth}
      \centering
      \caption*{Prompt: \textbf{A cat sitting on a window sill near a basket.}}
      \begin{subfigure}[b]{0.24\linewidth}
        \centering
        \captionsetup{justification=centering,margin=.5cm}
        \begin{overpic}[width=\linewidth]{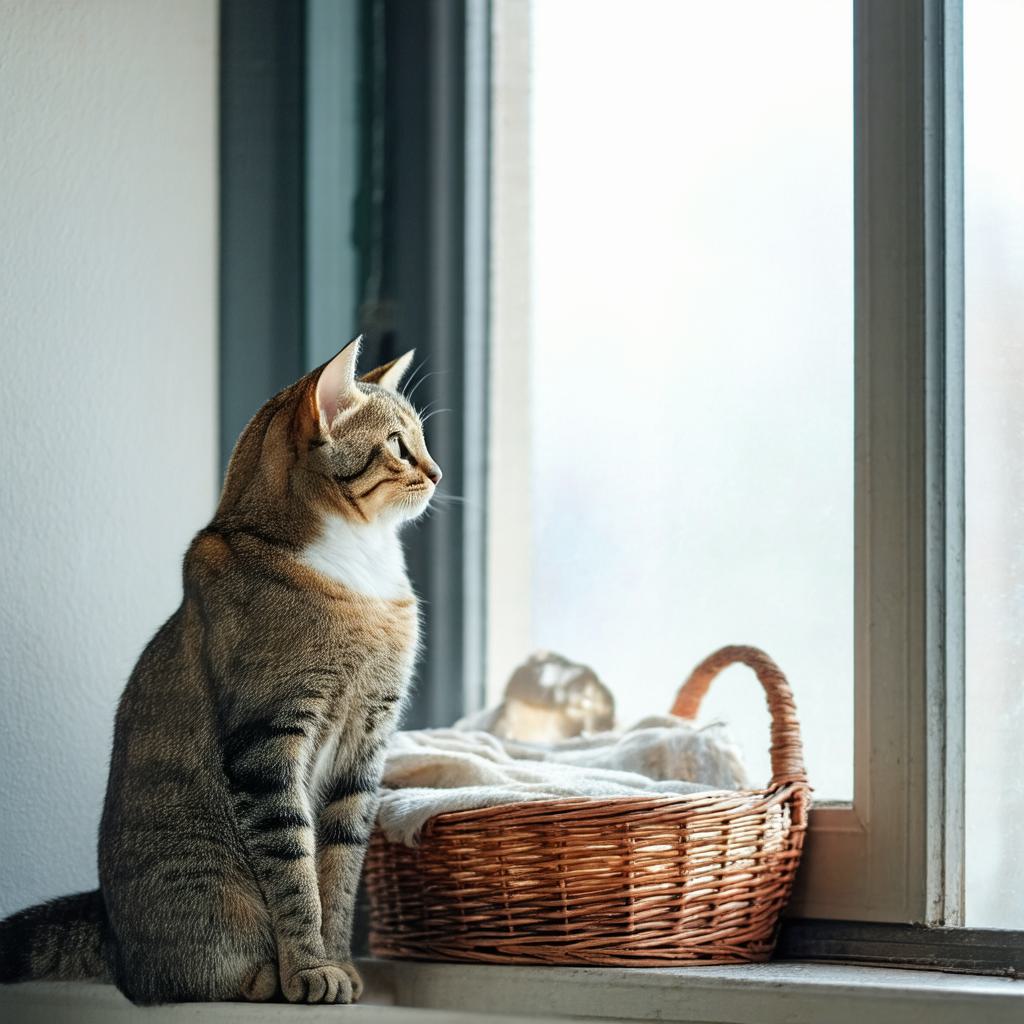}
            \put(23,0){\color{red}\linethickness{2pt}\framebox(15, 15){}}
        \end{overpic}
        \caption{CFG (baseline) NFE = 56}
      \end{subfigure}
      \hfill
      \begin{subfigure}[b]{0.24\linewidth}
        \centering
        \captionsetup{justification=centering}
        \begin{overpic}[width=\linewidth]{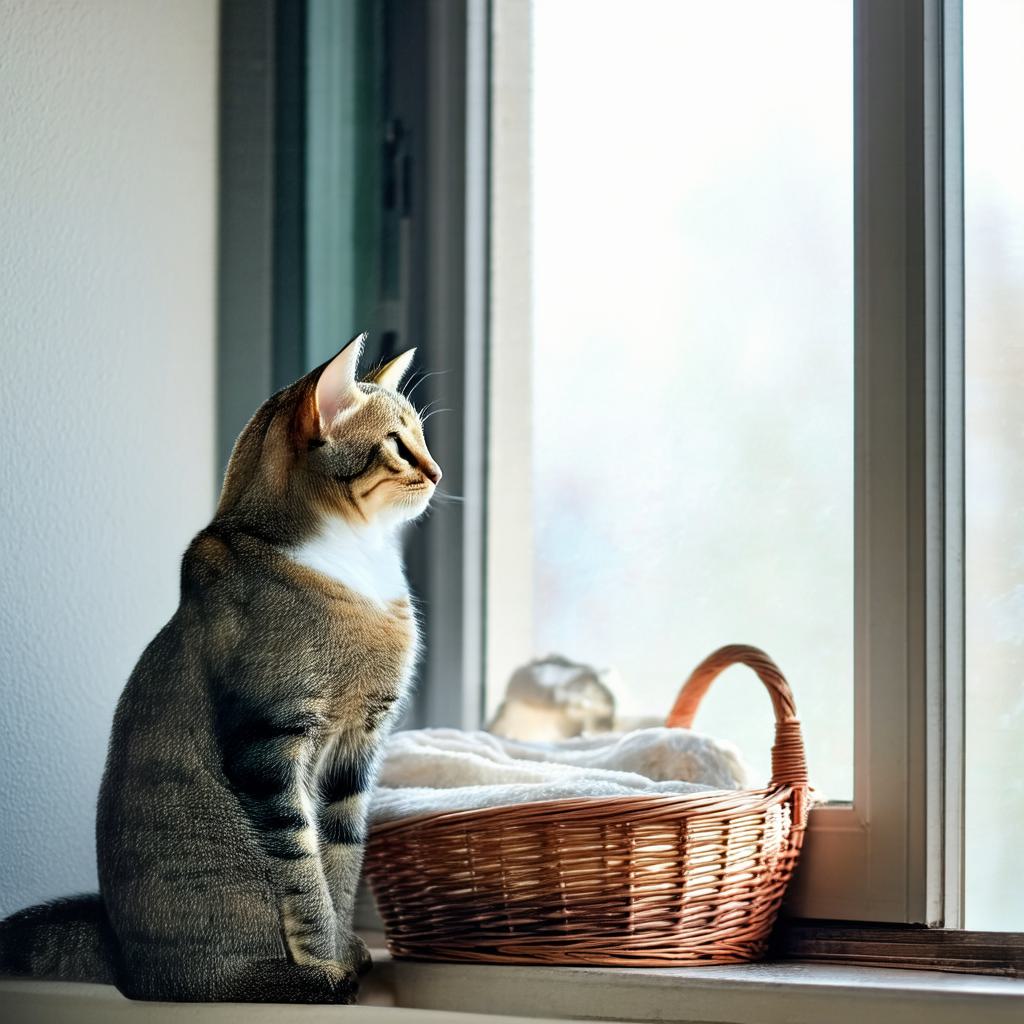}
            \put(23,0){\color{red}\linethickness{2pt}\framebox(15, 15){}}
        \end{overpic}
        \caption{CFG-Cache w/o FFT NFE = 38}
      \end{subfigure}
      \hfill
      \begin{subfigure}[b]{0.24\linewidth}
        \centering
        \captionsetup{justification=centering,margin=.7cm}
        \begin{overpic}[width=\linewidth]{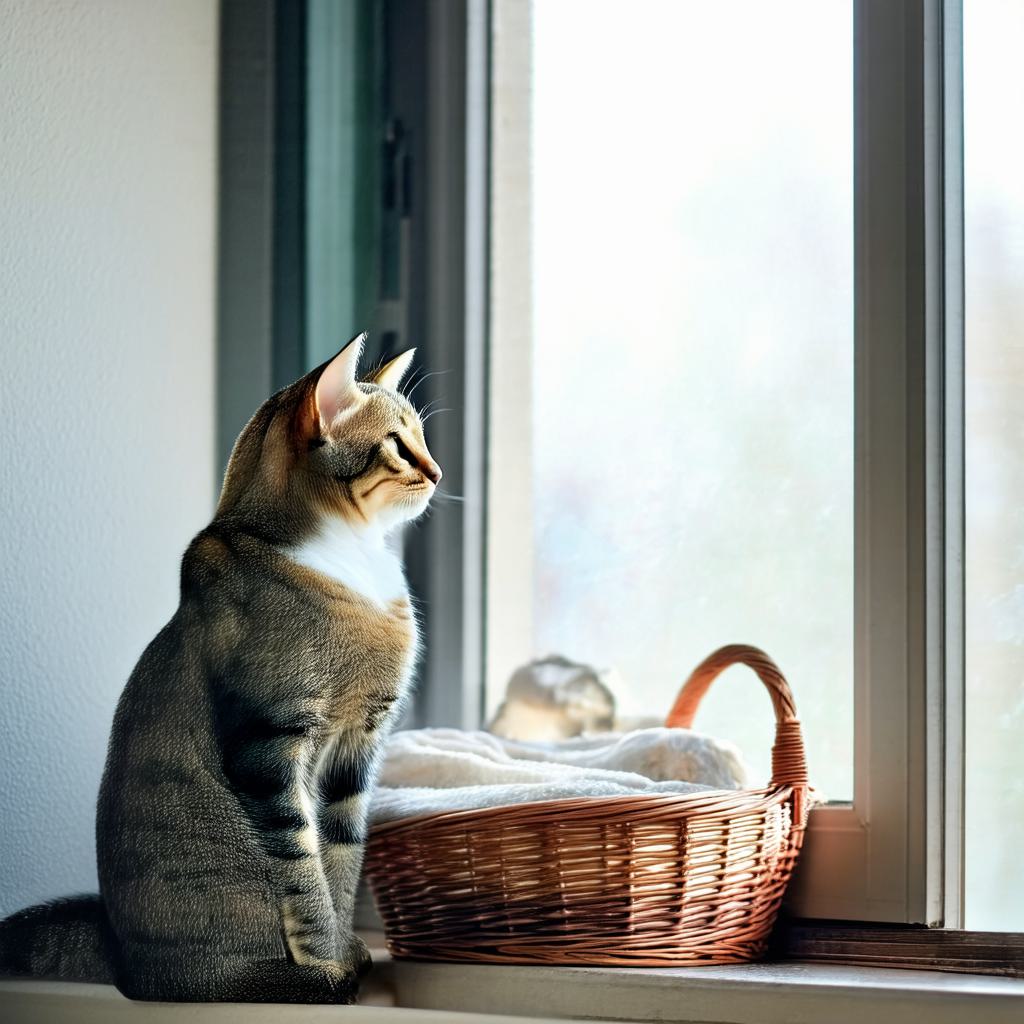}
            \put(23,0){\color{red}\linethickness{2pt}\framebox(15, 15){}}
        \end{overpic}
        \caption{CFG-Cache NFE = 38}
      \end{subfigure}
      \hfill
      \begin{subfigure}[b]{0.24\linewidth}
        \centering
        \captionsetup{justification=centering,margin=.9cm}
        \begin{overpic}[width=\linewidth]{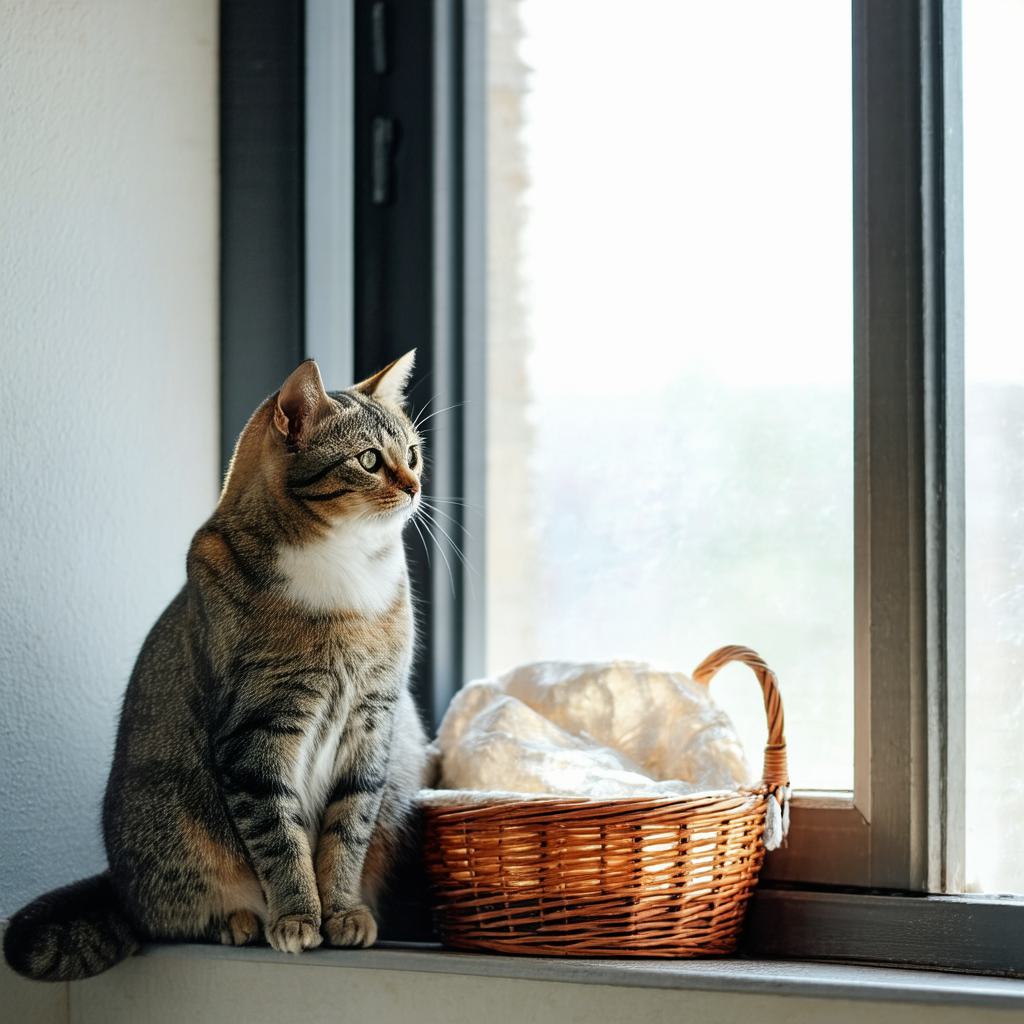}
            \put(23,0){\color{red}\linethickness{2pt}\framebox(15, 15){}}
        \end{overpic}
        \caption{Ours NFE = 38}
      \end{subfigure}
  \end{subfigure}
  
  \caption{
    \textbf{Comparison of visual results} for prompts from the COCO 2014 dataset using Stable Diffusion 3.5 Large.
  }
  \label{fig:more_qual_35}
\end{figure}

\begin{figure}[t]
  \centering

    \begin{subfigure}[b]{\linewidth}
      \centering
      \caption*{Prompt: \textbf{A red fire hydrant is set up in a grassy clearing.}}
      \begin{subfigure}[b]{0.24\linewidth}
        \centering
        \includegraphics[width=\linewidth]{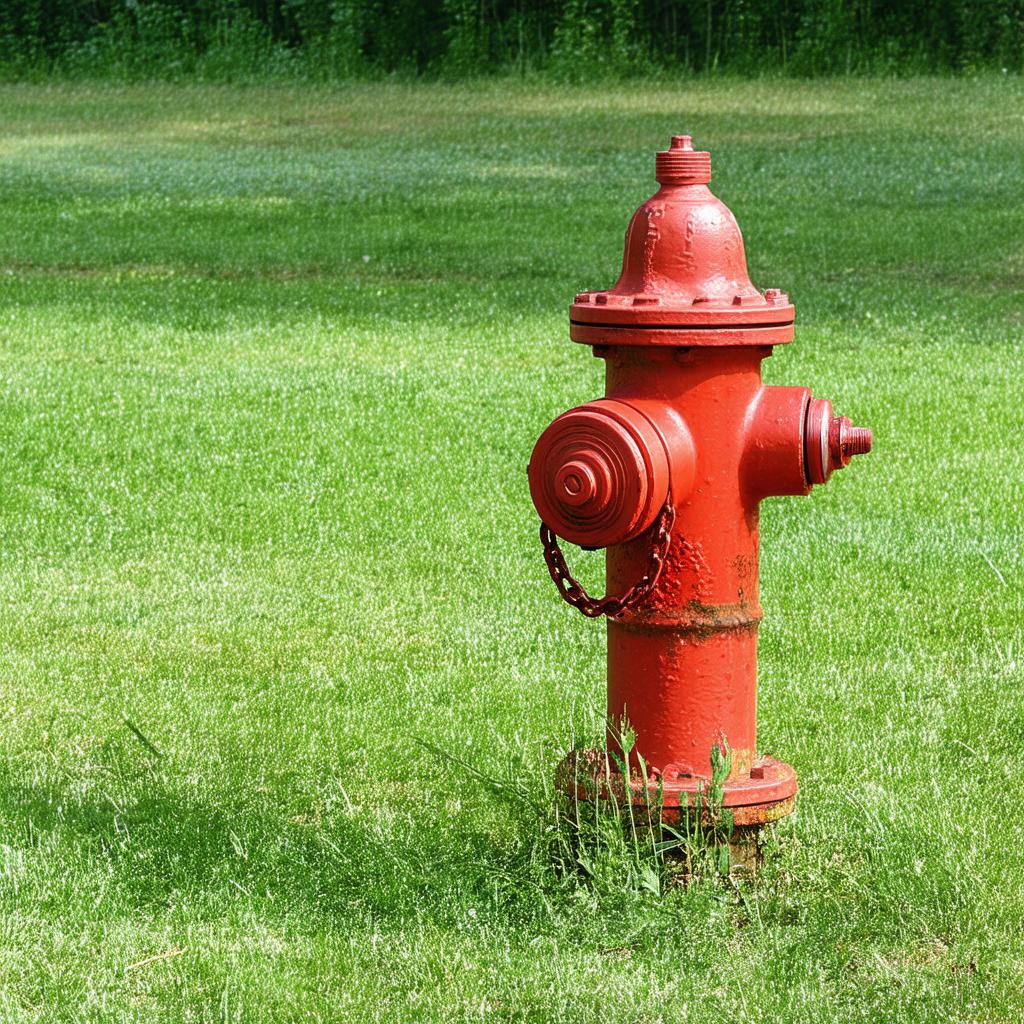}
      \end{subfigure}
      \hfill
      \begin{subfigure}[b]{0.24\linewidth}
        \centering
        \includegraphics[width=\linewidth]{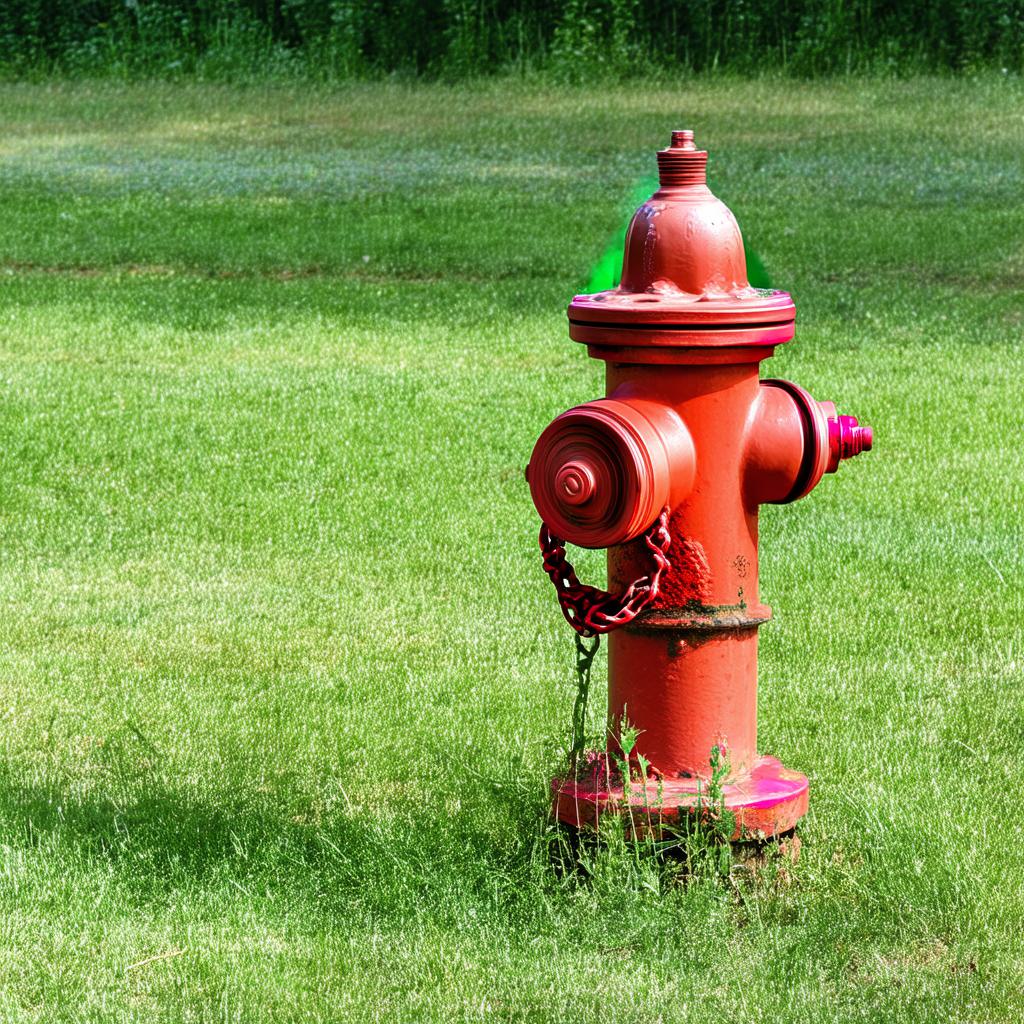}
      \end{subfigure}
      \hfill
      \begin{subfigure}[b]{0.24\linewidth}
        \centering
        \includegraphics[width=\linewidth]{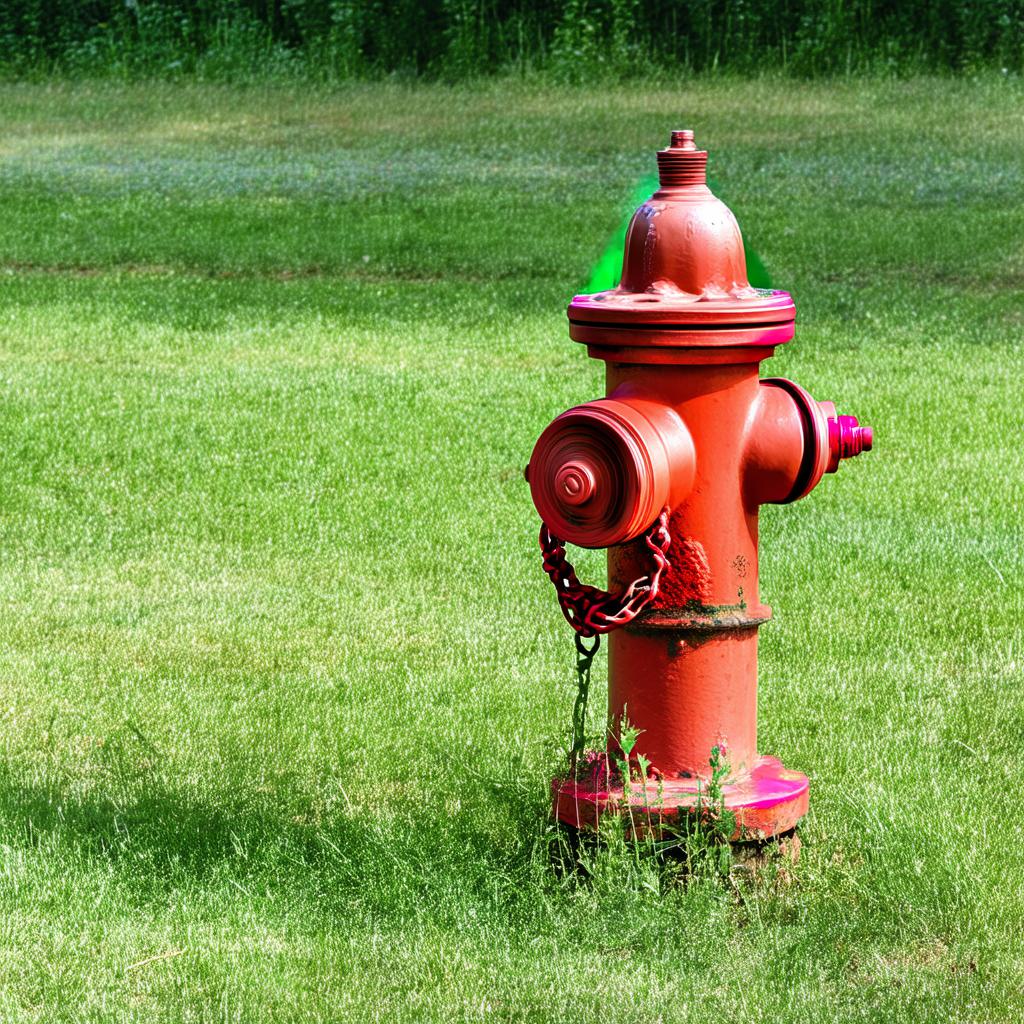}
      \end{subfigure}
      \hfill
      \begin{subfigure}[b]{0.24\linewidth}
        \centering
        \includegraphics[width=\linewidth]{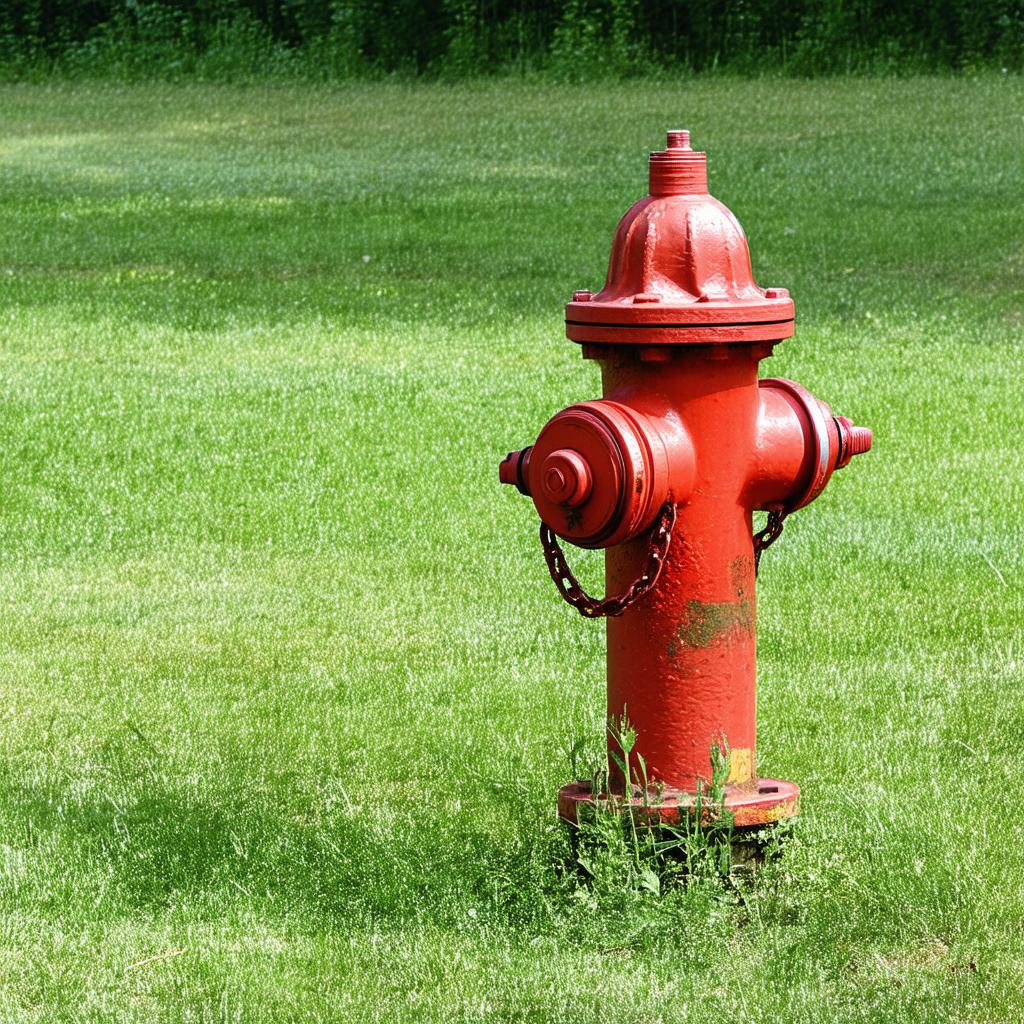}
      \end{subfigure}
  \end{subfigure}

  \begin{subfigure}[b]{\linewidth}
      \centering
      \caption*{Prompt: \textbf{A stop sign and one way sign are in front of a large building}}
      \begin{subfigure}[b]{0.24\linewidth}
        \centering
        \captionsetup{justification=centering,margin=.5cm}
        \includegraphics[width=\linewidth]{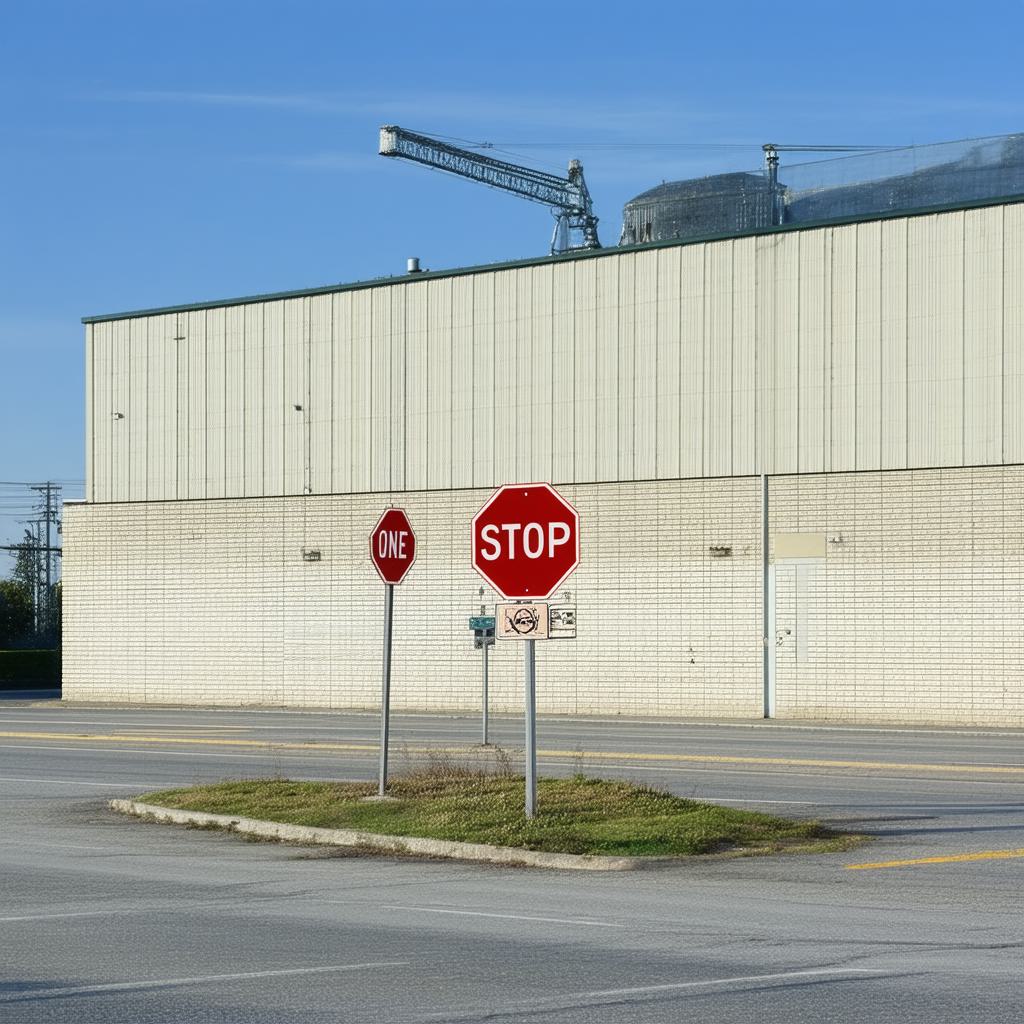}
        
        \caption{CFG (baseline) NFE = 56}
      \end{subfigure}
      \hfill
      \begin{subfigure}[b]{0.24\linewidth}
        \centering
        \captionsetup{justification=centering}
        \begin{overpic}[width=\linewidth]{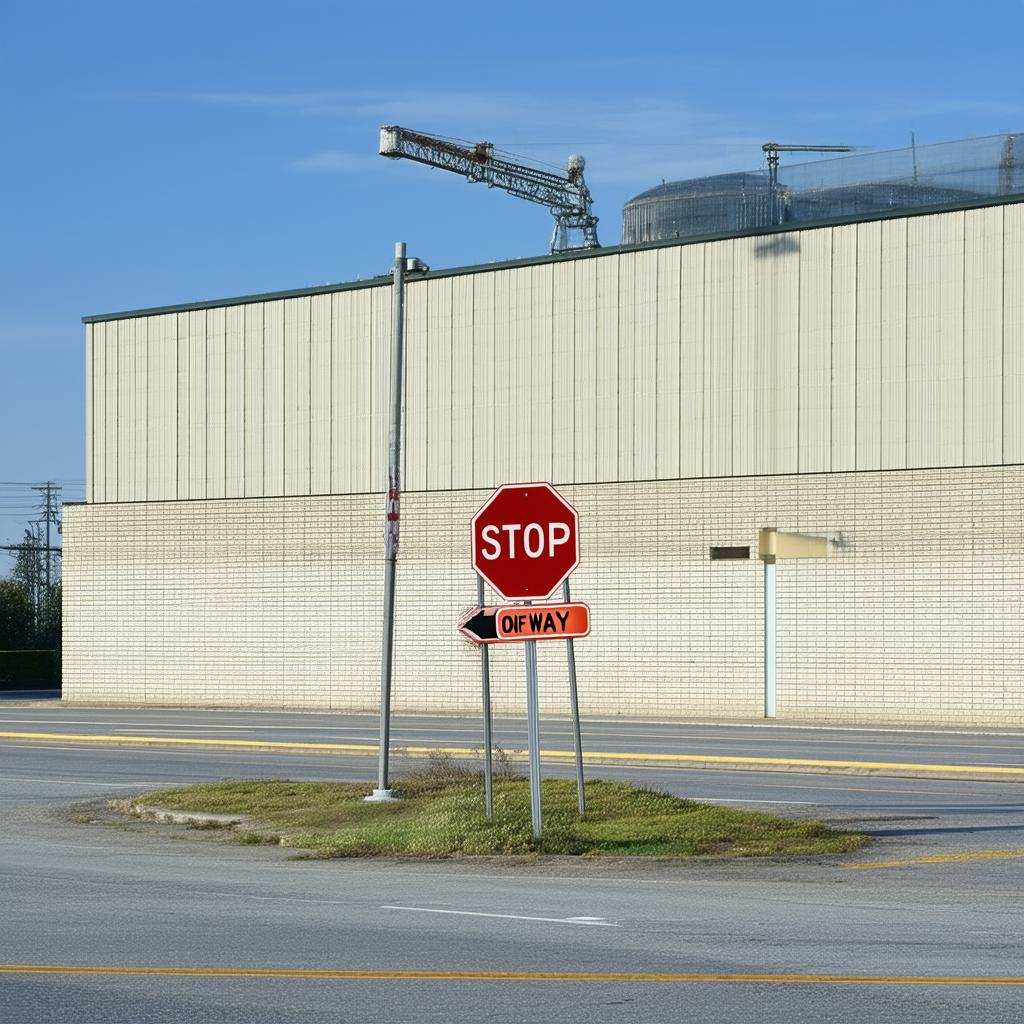}
            \put(44,36){\color{red}\linethickness{2pt}\framebox(15, 5){}}
        \end{overpic}        
        \caption{CFG-Cache w/o FFT NFE = 38}
      \end{subfigure}
      \hfill
      \begin{subfigure}[b]{0.24\linewidth}
        \centering
        \captionsetup{justification=centering,margin=.7cm}
        \begin{overpic}[width=\linewidth]{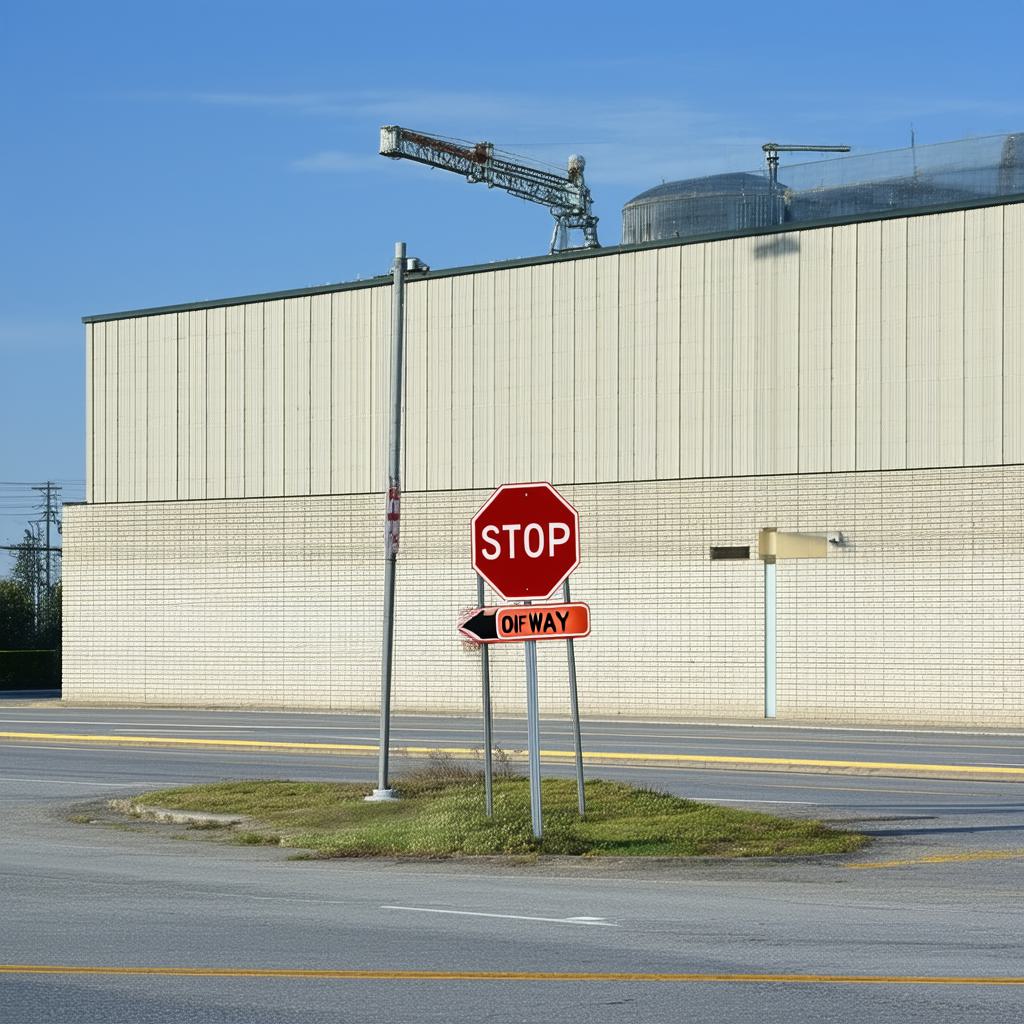}
            \put(44,36){\color{red}\linethickness{2pt}\framebox(15, 5){}}
        \end{overpic}        
        \caption{CFG-Cache NFE = 38}
      \end{subfigure}
      \hfill
      \begin{subfigure}[b]{0.24\linewidth}
        \centering
        \captionsetup{justification=centering,margin=.9cm}
        \begin{overpic}[width=\linewidth]{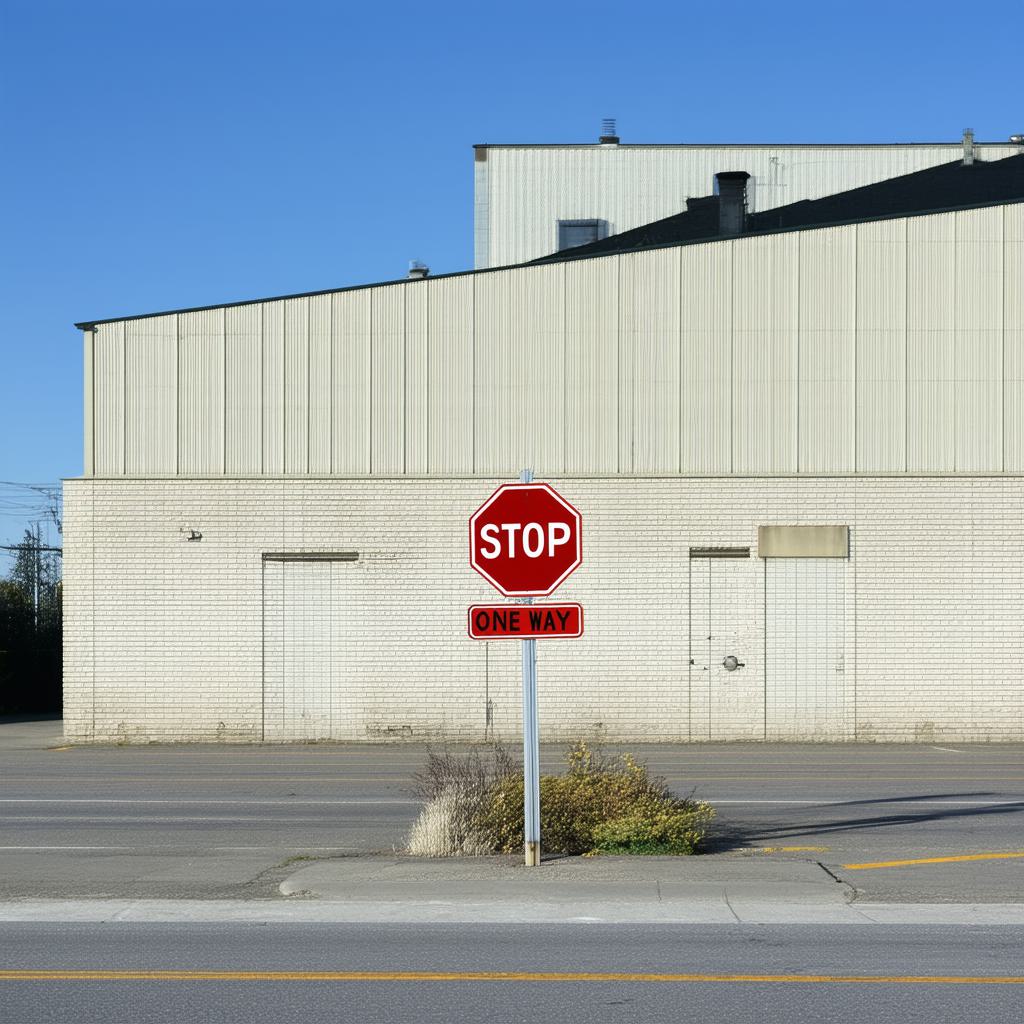}
            \put(44,36){\color{red}\linethickness{2pt}\framebox(15, 5){}}
        \end{overpic}        
        \caption{Ours NFE = 38}
      \end{subfigure}
  \end{subfigure}
  
  \caption{
    \textbf{Comparison of visual results} for prompts from the COCO 2014 dataset using Stable Diffusion 3.5 Large.
  }
  \label{fig:more_qual_35_2}
\end{figure}

\end{document}